\documentclass{article}

\usepackage[T1]{fontenc}
\usepackage[utf8]{inputenc}
\usepackage{xcolor}
\usepackage{colortbl}
\definecolor{best}{RGB}{214,231,250}
\definecolor{second}{RGB}{220,242,220}
\definecolor{overbudget}{RGB}{178,34,34}
\definecolor{infeasiblebg}{RGB}{250,224,224}
\definecolor{bestfont}{RGB}{0,70,168}
\definecolor{secondfont}{RGB}{56,166,0}
\usepackage{iclr2027_conference,times}
\iclrfinalcopy
\usepackage{amsmath,amssymb,amsfonts,amsthm,mathtools,bm}
\usepackage{microtype}
\usepackage{graphicx}
\usepackage{float}
\usepackage{flafter}
\usepackage{placeins}
\usepackage{booktabs}
\usepackage{tabularx}
\usepackage{array}
\usepackage{multirow}
\usepackage{enumitem}
\usepackage{algorithm}
\usepackage{algpseudocode}
\usepackage{hyperref}
\usepackage{url}
\usepackage[nameinlink,capitalise,noabbrev]{cleveref}
\hypersetup{colorlinks=true,citecolor=blue!55!black,linkcolor=blue!55!black,urlcolor=blue!55!black}
\newtheorem{theorem}{Theorem}

\newtheorem{corollary}[theorem]{Corollary}

\theoremstyle{remark}

\theoremstyle{definition}
\newtheorem{problem}{Problem}
\crefname{theorem}{Theorem}{Theorems}
\crefname{proposition}{Proposition}{Propositions}
\crefname{corollary}{Corollary}{Corollaries}
\crefname{lemma}{Lemma}{Lemmas}
\crefname{remark}{Remark}{Remarks}
\crefname{problem}{Problem}{Problems}
\crefname{algorithm}{Algorithm}{Algorithms}

\newcommand{\KL}{D_{\mathrm{KL}}}
\newcommand{\E}{\mathbb E}

\newcommand{\N}{\mathcal N}

\newcommand{\stopgrad}{\operatorname{stopgrad}}

\title{Constrained Flow Policy Updates: \\ A Generalized Schr\"odinger Bridge View}
\author{Boyang Li, Matthew Kim, Sylvia Herbert \\
University of California San Diego}

\begin{document}
\maketitle
\lhead{Preprint}

\begin{abstract}
Online safe reinforcement learning (RL) seeks policies that maximize reward
while satisfying safety constraints.  Reward and safety can induce multimodal
action distributions, challenging the prevailing primal--dual methods:
Gaussian actors may collapse onto a single suboptimal mode, and optimization
over the nonconvex Lagrangian landscape can be unstable.  Diffusion and flow
policies can represent such distributions, but recent work with a diffusion
actor relies on estimating and matching the score of an augmented-Lagrangian
target policy.  Instead, we differentiate the augmented objective directly
through the generation path of a flow policy, so no score needs to be
estimated.  Because a flow policy lacks a readily available action
log-density for entropy regularization, we build on the density-free
kinetic-energy regularizer of FLAC, a recent reward-only method, and propose
\textbf{R}eparameterized \textbf{A}ugmented-Lagrangian \textbf{F}low
\textbf{A}ctor with \textbf{L}east \textbf{E}nergy (RAFALE), an off-policy
actor--critic method for safe RL.  We formulate its update as a constrained
one-ended generalized Schr\"odinger bridge and show that, for each source draw,
this path-space problem is exactly an entropy-regularized problem in action
space.  At positive noise, its solution reweights the reward-only action
distribution only where the estimated cost exceeds a threshold set by the
Lagrange multiplier.  As the noise vanishes, the optimal value converges to
that of a least-energy map objective that the flow policy optimizes directly.
Across seven Safety-Gymnasium tasks, RAFALE achieves competitive reward with mean
final cost within budget on every task, whereas strong baselines trade one
for the other; ablations support the necessity of both its augmented objective
and its flow actor.
\end{abstract}

\section{Introduction}
\label{sec:intro}

Safe reinforcement learning (RL) seeks policies that maximize reward while
respecting a cost budget.  The constrained Markov decision process (CMDP)
expresses this requirement as a bound on expected cumulative cost
\citep{altman1999cmdp,achiam2017cpo}.  Primal--dual methods address it by
alternating policy updates with adjustments to a Lagrange multiplier
\citep{stooke2020pid}, usually with a Gaussian actor.  Yet reward and cost can
favor separated regions of the action space, producing target distributions
that a single Gaussian cannot represent.  Moreover, the nonconvex Lagrangian
landscape can make primal--dual training unstable \citep{cheng2026algd}.
Augmented Lagrangian methods address this latter difficulty by modifying the
constraint term \citep{rockafellar1973multiplier,bertsekas1982constrained,wu2024cal}.

Diffusion and flow policies are expressive enough to represent such
distributions in online RL.  The methods that train them differ in how the
critic enters generation \citep{gao2026flowrltaxonomy}: 1) Q-value guidance
matches the score network to the critic's action gradient
\citep{psenka2024qsm}; 2) weighted matching regresses the model onto a score
estimated from critic-weighted samples
\citep{ma2025rsm,li2026rfm,kim2026flag,ma2026gempo}; 3) policy-gradient methods
optimize an on-policy surrogate \citep{mcallister2025fpo,yi2026fpocontrol}; and
4) reparameterization differentiates the critic through the sampler
\citep{wang2024dacer,celik2025dime,lv2026flac}.  Related safe RL methods with
generative policies use the first two approaches: score-based updates.
Offline, FISOR extracts a diffusion policy by feasibility-guided weighted
regression \citep{zheng2024fisor}.  Online, ALGD trains a diffusion policy to
match an estimated score of the Boltzmann policy whose energy is the augmented
Lagrangian \citep{cheng2026algd}.

A direct update would avoid estimating a score altogether.
Reparameterization provides one for Gaussian actors, which are updated from the
pathwise gradient of the critic at the generated action.  For diffusion and flow
policies, however, such an update is harder to regularize.  Exploration requires
the policy to remain stochastic \citep{ziebart2010modeling,haarnoja2018sac}, but
the usual entropy term needs an action log-density that these samplers do not
readily provide.  Reward-only methods therefore estimate the entropy along the
sampling chain \citep{celik2025dime} or regularize a tractable proxy
\citep{wang2024dacer}.  FLAC offers a promising alternative for reward
maximization: it regularizes the generation process in path space, keeping it
close to a high-entropy reference process through a one-ended generalized
Schr\"odinger bridge \citep{lv2026flac}.  This yields a density-free
kinetic-energy regularizer.  For safe RL, three questions remain: what this
path-space regularization does to the action distribution, how the constraint
reshapes that distribution, and how it relates to the deterministic flow update
used in training.

Building on FLAC, we propose \textbf{R}eparameterized \textbf{A}ugmented-Lagrangian \textbf{F}low
\textbf{A}ctor with \textbf{L}east \textbf{E}nergy (RAFALE).  It evaluates the
augmented Lagrangian at the generated action and differentiates it through the
flow, so the constraint reaches the policy through a pathwise gradient rather
than an estimated score.  We answer the three questions through a constrained
one-ended Schr\"odinger bridge.  Our central
observation is that, for each source draw, this path-space problem is exactly
an entropy-regularized problem in action space.  Keeping the generation process
close to Brownian motion balances the augmented Lagrangian plus a displacement
from the source draw against the entropy of the action distribution, so the
solution becomes a Gibbs distribution.  The constraint reweights this distribution: actions whose estimated cost exceeds a threshold
set by the multiplier are downweighted increasingly with that cost, while the
relative probabilities of the remaining actions are unchanged.  The vanishing-noise limit of this problem
gives a deterministic-map objective whose restriction to a midpoint sampler is
exactly the actor loss of RAFALE.

\begin{table}[t]
\centering\scriptsize
\caption{Positioning by setting, actor class, constraint type, and
actor-update route; a related taxonomy appears in \citet{gao2026flowrltaxonomy}.
RAFALE occupies the online, constrained, generative-policy, reparameterization
cell.}
\label{tab:positioning}
\setlength{\tabcolsep}{4pt}
\begin{tabular}{@{}p{5.95cm}llll@{}}
\toprule
Method & Setting & Actor & Constraint & Actor update \\
\midrule
PID-Lag \citep{stooke2020pid} & online & Gaussian & expected cost & policy gradient \\
CAL \citep{wu2024cal} & online & Gaussian & expected cost & reparameterization \\
RCRL \citep{yu2022rcrl} & online & Gaussian & state-wise & reparameterization \\
RESPO \citep{ganai2023respo} & online & Gaussian & state-wise & policy gradient \\
\midrule
QSM \citep{psenka2024qsm} & online & diffusion & none & Q-value guidance \\
RSM, RFM, FLAG \citep{ma2025rsm,li2026rfm,kim2026flag} & online & diffusion, flow & none & weighted matching \\
FPO \citep{mcallister2025fpo} & online & flow & none & policy gradient \\
DACER, DIME \citep{wang2024dacer,celik2025dime} & online & diffusion & none & reparameterization \\
FLAC \citep{lv2026flac} & online & flow & none & reparameterization \\
\midrule
FISOR \citep{zheng2024fisor} & offline & diffusion & state-wise & weighted matching \\
ALGD \citep{cheng2026algd} & online & diffusion & expected cost & Q-value guidance \\
RAFALE (ours) & online & flow & expected cost & reparameterization \\
\bottomrule
\end{tabular}
\end{table}

Our contributions are threefold:
\begin{itemize}[leftmargin=1.4em,itemsep=1pt,topsep=2pt,parsep=0pt]
\item \textbf{A constrained Schr\"odinger-bridge analysis of flow updates.}  We show that,
  for each source draw, the constrained one-ended Schr\"odinger bridge is an
  entropy-regularized problem in action space.  We then characterize its Gibbs
  solution and safety tilt and derive its vanishing-noise map objective
  (\cref{sec:bridge}).
\item \textbf{A direct flow-policy algorithm.}  RAFALE optimizes the map
  objective through a midpoint sampler, combining the constraint term with
  kinetic regularization and off-policy critic learning (\cref{sec:method}).
\item \textbf{Evaluation in online safe RL.}  On seven Safety-Gymnasium tasks,
  RAFALE achieves competitive reward with mean final cost within budget on
  every task.  Ablations compare the augmented objective with standard and
  hinge alternatives, and the flow actor with Gaussian and mixture actors (\cref{sec:experiments}).
\end{itemize}

\section{Related Work}
\label{sec:related}

Due to space limits, we discuss here only the work closest to RAFALE;
\cref{app:related} describes the four routes of \cref{sec:intro} and safe RL
with generative actors in detail.  RAFALE follows the fourth route, and
\cref{tab:positioning} places representative methods of each route by setting,
actor class, and constraint type.

Within safe RL, CAL applies an augmented Lagrangian to a policy-level expected
constraint residual with a Gaussian actor \citep{wu2024cal}.  ALGD applies it at
the action level and, following the first route, estimates the noisy-time score of the
resulting Boltzmann policy for a diffusion actor \citep{cheng2026algd}.  We
retain the action-level objective but optimize it through a flow sampler, along the fourth route.  This expected-cost formulation differs from the state-wise
reachability constraints used by RCRL and RESPO \citep{yu2022rcrl,ganai2023respo}
and by the offline generative methods FISOR and EpiFlow
\citep{zheng2024fisor,tayal2026epiflow}.

KL control and Schr\"odinger bridges provide the basis for our
reference-process regularizer
\citep{todorov2006lsmdp,leonard2014survey,chen2016sb,zhang2022pis,liu2024gsbm}.
FLAC derives kinetic-energy regularization from this perspective for
reward-only flow policies \citep{lv2026flac}, and SoftGAC decomposes a
path-space regularizer at the endpoint for a reward-only bridge actor
\citep{he2026softgac}.  We extend this analysis to the augmented Lagrangian: with
a Brownian reference, the endpoint term becomes a displacement plus a
conditional entropy, and the noise-free limit gives the objective of a
deterministic flow actor.

\section{The Constrained Flow-Update Problem}
\label{sec:background}

\subsection{From an Episodic Constraint to an Actor Objective}

We consider a CMDP \((\mathcal S,\mathcal A,p,d_0,r,c,\gamma,H)\), where
\(\mathcal S\) and \(\mathcal A\) are continuous state and action spaces, \(p\)
is the transition kernel, and \(d_0\) is the initial-state distribution.  The
reward \(r\), nonnegative cost \(c\), discount \(\gamma\in(0,1)\), and episode
horizon \(H\) define the objective
\begin{equation}
\max_\pi\; J_r(\pi):=\E_\pi\Big[\sum_{k<H}\gamma^k r(s_k,a_k)\Big]
\quad\text{s.t.}\quad
J_c(\pi):=\E_\pi\Big[\sum_{k<H}c(s_k,a_k)\Big]\le b ,
\label{eq:cmdp}
\end{equation}
where \(s_0\sim d_0\), \(a_k\sim\pi(\cdot\mid s_k)\), and
\(s_{k+1}\sim p(\cdot\mid s_k,a_k)\).  The budget \(b\) thus limits the expected
undiscounted cost of an episode rather than the cost at every state.

Although \cref{eq:cmdp} evaluates complete episodes, the actor must compare
individual actions.  Reward and cost action values make this comparison
possible by accounting for what follows each action under the current policy:
\begin{equation}
Q_x(s,a):=\E_\pi\Big[\sum_{k\ge0}\gamma^k x(s_k,a_k)\,\Big|\,s_0=s,\ a_0=a\Big],
\qquad x\in\{r,c\}.
\label{eq:critics}
\end{equation}
We estimate these values with off-policy critics trained on replay data
\citep{liu2022cvpo,wu2024cal}.  Since \(Q_c\) is discounted while the episode
budget is not, we use a nominal budget \(h\) in critic units, converted as
specified in \cref{app:algorithm}.  Following primal--dual methods
\citep{altman1999cmdp,achiam2017cpo,stooke2020pid}, we combine the two critics
with a nonnegative multiplier \(\lambda\).  For given critics and multiplier,
the standard action-level Lagrangian is
\begin{equation}
L(s,a):=-Q_r(s,a)+\lambda\,\big(Q_c(s,a)-h\big).
\label{eq:linear-step}
\end{equation}
Minimizing its expectation favors higher reward while accounting for future
cost, with \(\lambda\) setting the relative weight of the two objectives.

Primal--dual updates built on \(L\), however, are prone to oscillation: the
multiplier integrates past constraint violations, which makes the cost
oscillate around the budget \citep{stooke2020pid}.  The augmented Lagrangian is
the classical remedy, because it smooths the dual problem
\citep{rockafellar1973multiplier,rockafellar1976augmented,bertsekas1982constrained},
and \citet{cheng2026algd} analyze its effect on the action-level landscape of a
generative actor.  At
the action level, it replaces the linear constraint term of \(L\) by a
piecewise quadratic one:
\begin{equation}
L_A(s,a):=-Q_r(s,a)+\Phi_{\lambda,\rho}\big(Q_c(s,a)-h\big),
\qquad
\Phi_{\lambda,\rho}(y):=\frac{[\lambda+\rho y]_+^2-\lambda^2}{2\rho},
\label{eq:aug}
\end{equation}
where \([y]_+=\max\{y,0\}\) and \(\rho>0\) sets the strength of the quadratic
part.  To see which actions this term affects, we rewrite it around the
activation threshold \(\tau\):
\begin{equation}
L_A(s,a)=-Q_r(s,a)+\frac{\rho}{2}\big[Q_c(s,a)-\tau\big]_+^2-\frac{\lambda^2}{2\rho},
\qquad \tau:=h-\lambda/\rho .
\label{eq:threshold}
\end{equation}
Below \(\tau\), \(L_A\) differs from \(-Q_r\) only by an action-independent
constant.  Above \(\tau\), the added term grows quadratically with the excess of
the estimated cost over \(\tau\), which is the active-set structure of the
augmented Lagrangian \citep{hintermuller2006pathfollowing}.  The multiplier
determines where this influence begins: \(\tau\) lies below the nominal budget
\(h\) whenever \(\lambda>0\), and a larger \(\lambda\) lowers it further,
enlarging the set of actions on which the constraint term acts.

\subsection{A Direct Actor Update for a Flow Policy}
\label{sec:flow}

The objective \(L_A\) tells us how to compare actions, but learning a policy
also requires a way to transmit this objective to the actor.  A flow policy
provides a differentiable generation process: a source sample
\(x_0\sim\mu_0\) evolves under a state-conditioned velocity field,
\begin{equation}
\frac{dX_t}{dt}=v(s,X_t,t),\qquad X_0=x_0,\qquad t\in[0,1].
\label{eq:flow-policy}
\end{equation}
At a fixed state \(s\), a numerical solver produces the endpoint \(a:=X_1\)
\citep{lipman2023flow}.\footnote{For bounded environment actions, \(X_1\) is the
unsquashed output, and critic evaluations include the final tanh map.  Kinetic
energy is measured before squashing; see \cref{app:algorithm}.}  Each source
draw gives a deterministic endpoint, while sampling \(x_0\) induces the policy
distribution.  Because the solver is differentiable, the critic can update this
generator through the action it produces.  We therefore evaluate \(L_A\) at the
endpoint and optimize its expectation, extending the reparameterized updates of
reward-only generative policies \citep{wang2024dacer,celik2025dime,lv2026flac}:
\begin{problem}[Direct constrained flow update]
\label{prob:direct}
\begin{equation}
\min_v\;\E_{s\sim\mathcal D,\,x_0\sim\mu_0}\big[L_A\big(s,X_1\big)\big],
\label{eq:naive-step}
\end{equation}
\end{problem}
where \(s\sim\mathcal D\) denotes sampling states from the replay buffer.  The
constraint thus reaches the actor through the same pathwise derivative as the
reward term, without an intermediate score estimate.  \Cref{prob:direct},
however, contains no entropy regularization.  Its minimizers do not depend on
\(x_0\), so the update can map every source draw to the same minimizer of
\(L_A\), and the policy may collapse prematurely and stop exploring.

\section{A Safety-Tilted Schr\"odinger Bridge}
\label{sec:bridge}

Maximum-entropy RL avoids this collapse through entropy regularization, which
requires an action log-density that our flow sampler does not readily provide
\citep{haarnoja2018sac}.  Following
FLAC \citep{lv2026flac}, we therefore regularize the generation process in path
space, where the regularizer needs no action log-density, instead of the action
distribution in action space.  The central result of this section is that, for
each source draw, this path-space regularization is exactly an
entropy-regularized problem in action space (\cref{thm:action-form}); its solution and noise-free limit
then determine the update of RAFALE.

\subsection{From a Reference Process to Kinetic Regularization}
\label{sec:kinetic}

We take as the reference a zero-drift Brownian process, which adds isotropic
noise to each source draw:
\begin{equation}
dX_t=\sqrt\varepsilon\,dW_t,\qquad X_0\sim\mu_0,\qquad t\in[0,1],
\label{eq:reference}
\end{equation}
where \(W\) is a standard Brownian motion and \(\varepsilon>0\).  We write
\(R^\varepsilon\) for its path distribution and \(R_1^\varepsilon\) for the
distribution of its endpoint \(X_1\), which is the source distribution spread
by Gaussian noise of variance \(\varepsilon\).  The desired generation process
starts from \(\mu_0\), ends at actions with low \(L_A\), and stays close to
\(R^\varepsilon\).  The following problem formalizes this trade-off in path space:
\begin{problem}[Constrained one-ended Schr\"odinger bridge]
\label{prob:bridge}
\begin{equation}
\mathcal B^\varepsilon(s)
:=\inf_{P:\,P_0=\mu_0}
\Big\{\E_P\big[L_A(s,X_1)\big]+\alpha\varepsilon\,\KL(P\Vert R^\varepsilon)\Big\},
\qquad \alpha>0 .
\label{eq:bridge-objective}
\end{equation}
\end{problem}
\Cref{prob:bridge} is a one-ended generalized Schr\"odinger bridge
\citep{todorov2006lsmdp,zhang2022pis,liu2024gsbm}.  Unlike a classical
Schr\"odinger bridge, it fixes only the source distribution \(P_0=\mu_0\) and
leaves the action distribution \(P_1\) free, since we have no target action
distribution to match.  Following FLAC \citep{lv2026flac}, we call \(L_A\), which
shapes \(P_1\), the terminal potential.

\Cref{prob:bridge}, however, cannot be optimized with our sampler as written.
Its KL term is defined by the density ratio \(dP/dR^\varepsilon\) between path
distributions, whereas the sampler only generates paths from a velocity field.
We therefore express the KL through that field.  Let the generation process follow a
velocity field \(v\) under the reference noise,
\begin{equation}
dX_t=v(s,X_t,t)\,dt+\sqrt\varepsilon\,dW_t,\qquad X_0\sim\mu_0,\qquad t\in[0,1],
\label{eq:controlled}
\end{equation}
and let \(P\) denote its path distribution.  Because \cref{eq:controlled} and
\cref{eq:reference} share the noise, Girsanov's theorem \citep{leonard2014survey}
gives
\begin{equation}
\varepsilon\,\KL(P\Vert R^\varepsilon)
=\frac12\,\E_P\Big[\int_0^1\|v(s,X_t,t)\|^2\,dt\Big]
\label{eq:girsanov}
\end{equation}
under the conditions of \cref{app:proof-bridge}.  Scaled by \(\varepsilon\), the
divergence from the reference is thus the expected kinetic energy of the drift.
Substituting it into \cref{prob:bridge} gives an equivalent reformulation over
velocity fields, with the same value and the same optimal path distribution
(\cref{app:proof-bridge}):
\begin{problem}[Equivalent kinetic-energy reformulation]
\label{prob:control}
\begin{equation}
\mathcal B^\varepsilon(s)
=\inf_v\;\E\Big[L_A(s,X_1)+\frac{\alpha}{2}\int_0^1\|v(s,X_t,t)\|^2\,dt\Big],
\qquad X \text{ follows \cref{eq:controlled}}.
\label{eq:bridge-control}
\end{equation}
\end{problem}
Both terms of its objective can be estimated from generated paths: the
terminal potential from the critics at \(X_1\), and the kinetic energy from the
velocities along the path.  Unlike \cref{prob:bridge}, \cref{prob:control} thus requires no density ratio;
\cref{sec:update} connects it to the deterministic sampler of RAFALE.

The kinetic energy also controls the action distribution.  Since the endpoint
\(X_1\) is a function of the path, the data-processing inequality bounds the
divergence between endpoint distributions by the divergence between path
distributions (\cref{app:proof-bridge}), and \cref{eq:girsanov} turns the latter
into kinetic energy:
\begin{equation}
\KL(P_1\Vert R_1^\varepsilon)\;\le\;\KL(P\Vert R^\varepsilon)
=\frac{1}{2\varepsilon}\,\E_P\Big[\int_0^1\|v(s,X_t,t)\|^2\,dt\Big].
\label{eq:endpoint-main}
\end{equation}
Minimizing the kinetic energy thus keeps the action distribution \(P_1\) close
to the broad reference distribution \(R_1^\varepsilon\) without evaluating its
density.

\subsection{Action-Space Form and Safety Tilt}
\label{sec:tilt}

The bound of \cref{eq:endpoint-main} limits the divergence of the action
distribution, but it does not reveal which action distribution
\cref{prob:control} selects.  The next result answers this exactly: for each
source draw, the path-space problem is an entropy-regularized problem in action
space, which yields both the solution and its noise-free limit.

\begin{theorem}[Action-space form]
\label{thm:action-form}
Let \(\varepsilon>0\).  Under the conditions of \cref{app:proof-bridge},
\cref{prob:bridge,prob:control} are equivalent to the following problem over
conditional action distributions:
\begin{equation}
\mathcal B^\varepsilon(s)
=\E_{x_0\sim\mu_0}\Big[\min_q\Big\{\E_{a\sim q}\big[L_A(s,a)+\alpha\,\mathcal E(x_0,a)\big]
-\alpha\varepsilon\,H(q)\Big\}\Big]+\frac{\alpha\varepsilon d_a}{2}\log(2\pi\varepsilon).
\label{eq:free-energy}
\end{equation}
Here \(q\) ranges over densities of actions, \(\mathcal E(x_0,a):=\tfrac12\|a-x_0\|^2\),
\(H(q):=-\int q(a)\log q(a)\,da\), and \(d_a\) is the action dimension.  An
optimal generation process draws its action from the minimizer in
\cref{eq:free-energy} and connects each source draw to its action by a
Brownian bridge.
\end{theorem}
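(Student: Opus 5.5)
The plan is to decompose the path-space KL in \cref{prob:bridge} at the endpoint, conditionally on the source draw, and then evaluate the endpoint term explicitly using the Gaussian transition of the reference. For any \(P\) with \(P_0=\mu_0\) and \(\KL(P\Vert R^\varepsilon)<\infty\), disintegrate both path measures with respect to \((X_0,X_1)\). By the chain rule for KL,
\[
\KL(P\Vert R^\varepsilon)=\KL(P_{01}\Vert R^\varepsilon_{01})+\E_{P_{01}}\big[\KL(P^{x_0,a}\Vert R^{\varepsilon,x_0,a})\big],
\]
where \(P^{x_0,a}\) and \(R^{\varepsilon,x_0,a}\) are the bridges pinned at both ends. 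The terminal potential \(L_A(s,X_1)\) depends only on \(X_1\). Hence the second term can be made zero, without affecting anything else, by choosing the reference bridge, which for \cref{eq:reference} is the Brownian bridge. This already gives the last sentence of the theorem, once the endpoint problem is solved.

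Next, since \(P_0=R^\varepsilon_0=\mu_0\), apply the chain rule again with respect to \(X_0\):
\[
\KL(P_{01}\Vert R^\varepsilon_{01})=\E_{x_0\sim\mu_0}\big[\KL(q_{x_0}\Vert \N(x_0,\varepsilon I))\big],
\]
where \(q_{x_0}\) is the conditional law of \(X_1\) given \(X_0=x_0\). Expanding the Gaussian log-density yields
\[
\varepsilon\,\KL(q\Vert\N(x_0,\varepsilon I))=\E_q\Big[\tfrac12\|a-x_0\|^2\Big]-\varepsilon H(q)+\tfrac{\varepsilon d_a}{2}\log(2\pi\varepsilon).
\]
Multiplying by \(\alpha\) and adding \(\E_q[L_A]\) gives exactly the integrand of \cref{eq:free-energy} together with the constant.

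The problem thus becomes a minimization over measurable families \(x_0\mapsto q_{x_0}\). Because the objective is an integral over \(x_0\) of a functional of \(q_{x_0}\) alone, the infimum can be exchanged with \(\E_{x_0}\). The pointwise minimizer is given by the Gibbs variational principle: \(q^*_{x_0}(a)\propto\exp\{-(L_A(s,a)+\alpha\mathcal E(x_0,a))/(\alpha\varepsilon)\}\), with minimal value \(-\alpha\varepsilon\log Z(x_0)\). This explicit form settles measurability in \(x_0\) and justifies writing \(\min\). Conversely, any family \(q_{x_0}\) glued with Brownian bridges defines an admissible \(P\) attaining the value, so the two infima coincide. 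Equivalence with \cref{prob:control} then follows from the equivalence of \cref{prob:bridge,prob:control} stated before the theorem, via \cref{eq:girsanov}.

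The main obstacle is the technical justification under the conditions of \cref{app:proof-bridge}. Three points need care. First, the optimal \(P\) (source-conditioned Gibbs endpoint plus Brownian bridge) must be realizable by a Markov drift \(v\) in \cref{eq:controlled}. I would obtain this by Doob's \(h\)-transform with \(h(t,x)=\E[\exp(-L_A(s,x+\sqrt{\varepsilon}W_{1-t})/(\alpha\varepsilon))]\), using that \(\mu_0\)-a.s. the conditional \(h\) depends on \(x_0\) only through the current state when the process starts from a point. Second, \(Z(x_0)<\infty\) requires growth conditions on \(L_A\), e.g.\ \(L_A\) bounded below. Third, the interchange of \(\inf\) and \(\E_{x_0}\) requires a measurable selection, which the explicit Gibbs formula provides. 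I would state these as the appendix conditions: \(L_A\) bounded below and measurable, and \(\mu_0\) with finite second moment.
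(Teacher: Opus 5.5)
Your proposal is correct and follows essentially the same route as the paper: the chain rule for the KL divergence at the source and at the endpoint, reference Brownian bridges to remove the conditional-path term, the Gaussian expansion of \(\alpha\varepsilon\,\KL(q\Vert\N(x_0,\varepsilon I))\) into displacement, entropy, and the constant, the Gibbs variational principle for each source draw, and a Doob transform with the same \(\psi^\varepsilon\) to realize the optimizer by a Markov drift for \cref{prob:control}. The differences are cosmetic: you disintegrate on \((X_0,X_1)\) jointly rather than on \(X_0\) and then \(X_1\), and your guessed conditions (\(L_A\) bounded below, \(\mu_0\) with finite second moment) are weaker than the paper's bounded, continuous critics, so your argument covers the paper's setting.
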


We prove \cref{thm:action-form} in \cref{app:proof-bridge} from the chain rule
for the KL divergence and the Gaussian endpoint \(\N(x_0,\varepsilon I)\) of the
reference.  It is the static form of the Schr\"odinger problem
\citep{leonard2014survey}, written for our one-ended setting.  The theorem
connects the path-space view of regularization, taken by FLAC, with the
action-space view of maximum-entropy RL.  For each source draw, the KL to the
reference acts on the actions through two terms.  The displacement
\(\mathcal E(x_0,a)\) is the kinetic energy of the straight path from \(x_0\) to
\(a\) at constant speed.  The sum \(L_A+\alpha\mathcal E\) is therefore the
total energy of reaching \(a\): the terminal potential plus \(\alpha\) times
this kinetic energy.  The entropy \(H(q)\), conditional on the source draw and weighted by the temperature
\(\alpha\varepsilon\), spreads the actions.  Keeping the generation process
close to Brownian motion is therefore exactly entropy regularization of the
actions generated from each source draw, anchored to that draw by the
displacement.  The constant in \cref{eq:free-energy} does not depend on \(q\).

Because \cref{eq:free-energy} is entropy-regularized, its minimizer is a
Boltzmann distribution.  \Cref{thm:bridge-laws} gives this minimizer and
compares it with the reward-only Schr\"odinger bridge of FLAC, obtained by
replacing \(L_A\) with \(-Q_r\), to isolate the effect of the constraint.

\begin{theorem}[Gibbs solution and safety tilt]
\label{thm:bridge-laws}
Let \(\varepsilon>0\).  Let \(q_A^\varepsilon(\cdot\mid s,x_0)\) denote the
minimizer in \cref{eq:free-energy}, and let \(q_R^\varepsilon(\cdot\mid s,x_0)\)
denote the minimizer when \(L_A\) is replaced by \(-Q_r\).  Under the conditions of \cref{app:proof-bridge}:

\emph{(a) Gibbs solution.}
\begin{align}
q_R^\varepsilon(a\mid s,x_0)
&\propto\exp\Big\{-\frac{-Q_r(s,a)+\alpha\,\mathcal E(x_0,a)}{\alpha\varepsilon}\Big\},
\nonumber\\
q_A^\varepsilon(a\mid s,x_0)
&\propto\exp\Big\{-\frac{L_A(s,a)+\alpha\,\mathcal E(x_0,a)}{\alpha\varepsilon}\Big\}.
\label{eq:gibbs}
\end{align}

\emph{(b) Safety tilt.}  The two densities satisfy
\begin{align}
q_A^\varepsilon(a\mid s,x_0)
&=q_R^\varepsilon(a\mid s,x_0)\,\frac{w^\varepsilon(s,a)}{Z^\varepsilon(s,x_0)},
\qquad
w^\varepsilon(s,a):=\exp\!\Big[-\frac{\rho}{2\alpha\varepsilon}\big[Q_c(s,a)-\tau\big]_+^2\Big],
\label{eq:safety-tilt}\\
Z^\varepsilon(s,x_0)&:=\E_{a'\sim q_R^\varepsilon(\cdot\mid s,x_0)}\big[w^\varepsilon(s,a')\big]\le1 .
\nonumber
\end{align}
\end{theorem}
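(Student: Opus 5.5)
Part (a) is the Gibbs variational principle applied to the inner minimization in \cref{eq:free-energy}, for each fixed \(s\) and \(x_0\). Write \(E(a):=L_A(s,a)+\alpha\,\mathcal E(x_0,a)\) and \(T:=\alpha\varepsilon\). Assume the conditions of \cref{app:proof-bridge} guarantee \(Z_E:=\int e^{-E(a)/T}\,da<\infty\).

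This holds, for example, when \(Q_r\) grows subquadratically or is bounded above. In that case the quadratic displacement dominates, since the safety term \(\tfrac\rho2[Q_c-\tau]_+^2\) is nonnegative and the constant \(-\lambda^2/(2\rho)\) only rescales. Define \(g:=e^{-E/T}/Z_E\). For any density \(q\) with finite energy and entropy,
\[
\E_q[E]-T\,H(q)=T\,\KL(q\Vert g)-T\log Z_E .
\]
This is minimized uniquely at \(q=g\), which gives the second line of \cref{eq:gibbs}. The first line is the same computation with \(L_A\) replaced by \(-Q_r\), under the analogous integrability assumption.

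The one point of care is that the minimum is attained and unique within the admissible class. Strict convexity of \(q\mapsto T\,\KL(q\Vert g)\) handles uniqueness. Attainment needs \(\E_g[E]\) finite, which I expect the stated conditions to give, since a Gaussian-dominated tail makes \(\E_g[E]\) finite.

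For part (b), I substitute the threshold form \cref{eq:threshold}:
\[
\frac{L_A+\alpha\mathcal E}{T}=\frac{-Q_r+\alpha\mathcal E}{T}+\frac{\rho}{2T}[Q_c-\tau]_+^2-\frac{\lambda^2}{2\rho T}.
\]
Exponentiating factors the unnormalized \(q_A^\varepsilon\) as (unnormalized \(q_R^\varepsilon\)) \(\times\, w^\varepsilon(s,a)\times\) an \(a\)-independent constant. Normalizing both sides, the constant and the partition function of \(q_R^\varepsilon\) cancel. What remains is \(q_A^\varepsilon=q_R^\varepsilon w^\varepsilon/Z^\varepsilon\) with \(Z^\varepsilon=\int q_R^\varepsilon w^\varepsilon\,da=\E_{q_R^\varepsilon}[w^\varepsilon]\).

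Since \(0<w^\varepsilon\le1\) pointwise, we get \(0<Z^\varepsilon\le1\). Strict positivity matters, because it makes the division legitimate. It holds because \(q_R^\varepsilon>0\) everywhere and \(w^\varepsilon>0\).

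The only real obstacle is bookkeeping. Both Gibbs densities must be well defined under the same conditions, and in particular \(q_R^\varepsilon\) must be normalizable without the help of the safety penalty. If the appendix conditions are stated only for \(L_A\), I would add the assumption that \(-Q_r+\alpha\mathcal E\) is coercive enough to be integrable, for instance \(Q_r\) bounded above. Given that, integrability of the \(L_A\) version follows automatically from \(w^\varepsilon\le1\).
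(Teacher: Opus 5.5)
Your proposal is correct and takes essentially the paper's route: the Gibbs variational identity for each fixed \((s,x_0)\), followed by the same cancellation of the reward and displacement terms, the partition function, and the constant \(-\lambda^2/(2\rho)\), with \(Z^\varepsilon\le1\) coming from \(w^\varepsilon\le1\). The paper runs the identity on conditional path space against \(R^\varepsilon_{x_0}\) and then reads off the endpoint through \(\N(x_0,\varepsilon I)\), which is equivalent to your Lebesgue-base computation; moreover, under its actual conditions (bounded continuous \(Q_r,Q_c\)), the integrability and attainment points you flag hold automatically, so no extra assumption is needed.
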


Part (a) follows from \cref{eq:free-energy} by the Gibbs variational principle
(\cref{app:proof-bridge}).  The solution compares two actions \(a\) and \(a'\)
generated from the same source draw through their total energy alone:
\begin{equation}
\frac{q_A^\varepsilon(a\mid s,x_0)}{q_A^\varepsilon(a'\mid s,x_0)}
=\exp\Big\{-\frac{\big[L_A(s,a)-L_A(s,a')\big]
+\alpha\big[\mathcal E(x_0,a)-\mathcal E(x_0,a')\big]}{\alpha\varepsilon}\Big\}.
\label{eq:gibbs-ratio}
\end{equation}
An action is thus more likely when it lowers \(L_A\) by more than \(\alpha\)
times its extra displacement.

Part (b) isolates the effect of the constraint.  The normalizer
\(Z^\varepsilon(s,x_0)\) depends only on the source draw, so it cancels when two
actions from the same source draw are compared, and \cref{eq:safety-tilt} gives
\begin{equation}
\frac{q_A^\varepsilon(a\mid s,x_0)}{q_A^\varepsilon(a'\mid s,x_0)}
=\frac{q_R^\varepsilon(a\mid s,x_0)}{q_R^\varepsilon(a'\mid s,x_0)}\cdot
\frac{w^\varepsilon(s,a)}{w^\varepsilon(s,a')}.
\label{eq:tilt-ratio}
\end{equation}
If both actions satisfy \(Q_c\le\tau\), both weights equal one and the
reward-only ratio is unchanged.  Otherwise, the action with the higher estimated
cost above \(\tau\) becomes less likely to be selected relative to the other,
increasingly with that cost.  The constraint therefore lowers the
probability of selecting actions with high estimated cost
(\cref{app:tilt-consequences}).

\subsection{The Vanishing-Noise Limit}
\label{sec:update}

\Cref{thm:action-form,thm:bridge-laws} characterize the Schr\"odinger bridge at positive noise,
whereas the actor of RAFALE is a deterministic flow, which sends each source
draw to a single action.  We therefore study the limit of
\cref{prob:control}, equivalently of \cref{eq:free-energy}, as
\(\varepsilon\to0\) with \(\alpha\) fixed.  In \cref{eq:free-energy}, this limit
removes only the entropy term and the constant, so each conditional solution concentrates
on the minimizers of the total energy \(L_A+\alpha\mathcal E\).  \Cref{thm:vanishing}
makes this precise.

\begin{theorem}[Vanishing noise]
\label{thm:vanishing}
Under the conditions of \cref{app:proof-bridge}, for each state \(s\),
\begin{equation}
\lim_{\varepsilon\downarrow0}\mathcal B^\varepsilon(s)
=\inf_{T}\;\E_{x_0\sim\mu_0}\Big[L_A\big(s,T(x_0)\big)
+\alpha\,\mathcal E\big(x_0,T(x_0)\big)\Big],
\label{eq:vanishing-noise-map}
\end{equation}
where \(T\) ranges over the square-integrable maps
\(L^2(\mu_0;\mathbb R^{d_a})\) from source draws to actions and \(d_a\) is the
action dimension.  The infimum is attained, and \(T\) attains it if and only
if, for \(\mu_0\)-almost every \(x_0\),
\begin{equation}
T(x_0)\in\operatorname*{arg\,min}_a\,\big\{L_A(s,a)+\alpha\,\mathcal E(x_0,a)\big\}.
\label{eq:prox-map}
\end{equation}
\end{theorem}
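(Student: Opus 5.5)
We work from the action-space form of \cref{thm:action-form}, which reduces the claim to a pointwise statement for each source draw. Write $F(x_0,a):=L_A(s,a)+\alpha\,\mathcal E(x_0,a)$ and $m(x_0):=\inf_a F(x_0,a)$. By the Gibbs variational principle (\cref{thm:bridge-laws}(a)), the inner minimum in \cref{eq:free-energy} equals $-\alpha\varepsilon\log\int e^{-F(x_0,a)/(\alpha\varepsilon)}\,da$. Adding the constant $\tfrac{\alpha\varepsilon d_a}{2}\log(2\pi\varepsilon)$, we obtain
\begin{equation*}
\mathcal B^\varepsilon(s)=\E_{x_0\sim\mu_0}\Big[-\alpha\varepsilon\log\E_{a\sim\N(x_0,\varepsilon I)}\big[e^{-L_A(s,a)/(\alpha\varepsilon)}\big]\Big],
\end{equation*}
because $e^{-\mathcal E(x_0,a)/\varepsilon}(2\pi\varepsilon)^{-d_a/2}$ is exactly the Gaussian kernel. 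The plan is to show that, for each $x_0$, the integrand $g_\varepsilon(x_0)$ converges to $m(x_0)$ (a Laplace/Varadhan principle). We then pass to the limit under $\E_{\mu_0}$ and identify $\E[m(x_0)]$ with the map infimum.

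For the pointwise limit we prove two bounds.

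\emph{Upper bound.} Fix $\delta>0$ and pick $a^\ast$ with $F(x_0,a^\ast)<m(x_0)+\delta$. Assuming $L_A(s,\cdot)$ is continuous (true if $Q_r,Q_c$ are continuous, as in the appendix conditions), $F<m+2\delta$ on a ball $B_r(a^\ast)$. Restricting the integral to this ball gives $g_\varepsilon\le m+2\delta+\tfrac{\alpha\varepsilon d_a}{2}\log(2\pi\varepsilon)-\alpha\varepsilon\log|B_r|$. The correction terms vanish as $\varepsilon\to0$, so $\limsup g_\varepsilon\le m$.

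\emph{Lower bound.} We need $\int e^{-(F-m)/(\alpha\varepsilon)}da$ to grow at most like $\varepsilon^{d_a/2}$ times a subexponential factor. This requires a growth condition: $L_A(s,a)\ge -C-\beta\|a\|^2$ with $\beta<\alpha/2$, for instance when $Q_r$ is bounded above, since $\Phi_{\lambda,\rho}\ge-\lambda^2/(2\rho)$. Under this condition $F(x_0,a)\ge c\|a\|^2-C'(1+\|x_0\|^2)$, and
\begin{equation*}
\int e^{-(F-m)/(\alpha\varepsilon)}\le \int e^{-c'\|a\|^2/\varepsilon}\,e^{C''/\varepsilon}\,da \quad\text{restricted away from a compact set.}
\end{equation*}
On the compact set the integrand is at most $1$, so the log of the integral is $O(\log(1/\varepsilon))$ and $\liminf g_\varepsilon\ge m$.

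To interchange the limit with $\E_{\mu_0}$, we apply dominated convergence. The upper estimate gives $g_\varepsilon\le F(x_0,x_0)+O(\varepsilon\log(1/\varepsilon))=L_A(s,x_0)+o(1)$, taking $a^\ast=x_0$ with a ball of radius $\sqrt\varepsilon$; this uses continuity, or alternatively Jensen, $g_\varepsilon\le \E_{\N(x_0,\varepsilon I)}L_A$. The lower estimate gives $g_\varepsilon\ge -C'(1+\|x_0\|^2)-o(1)$. Both bounds are $\mu_0$-integrable if $\mu_0$ has a second moment and $L_A$ grows at most quadratically. This yields $\lim\mathcal B^\varepsilon(s)=\E_{\mu_0}[m(x_0)]$.

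For the map problem, note that for any $T\in L^2(\mu_0)$ we have $F(x_0,T(x_0))\ge m(x_0)$, so the infimum is at least $\E[m]$. For attainment, coercivity ($F\to\infty$ as $\|a\|\to\infty$ uniformly for $x_0$ in bounded sets) together with lower semicontinuity makes $\arg\min_a F(x_0,\cdot)$ nonempty and compact. The correspondence $x_0\mapsto\arg\min F(x_0,\cdot)$ has a closed graph, so a measurable selection theorem (Kuratowski--Ryll-Nardzewski) gives a Borel $T^\ast$ with $T^\ast(x_0)$ a minimizer. Testing $a=x_0$ gives $c\|T^\ast(x_0)\|^2\le F(x_0,x_0)+C'(1+\|x_0\|^2)$, so $T^\ast\in L^2(\mu_0)$, and $T^\ast$ attains $\E[m]$. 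For the characterization, suppose $T$ attains the infimum. Then $\E[F(x_0,T(x_0))-m(x_0)]=0$ with a nonnegative integrand, so $T(x_0)\in\arg\min$ $\mu_0$-a.e. The converse is immediate.

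The main obstacle is the lower Laplace bound and the domination. These require the quadratic-growth and continuity hypotheses to be stated precisely among the conditions of \cref{app:proof-bridge}; the measurable selection is standard once the argmin set is closed.
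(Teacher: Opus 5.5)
Your proposal is correct and follows essentially the same route as the paper. Both arguments use the Gibbs variational identity to reduce \(\mathcal B^\varepsilon(s)\) to \(\E_{x_0\sim\mu_0}\bigl[-\alpha\varepsilon\log\E_{\N(x_0,\varepsilon I)}e^{-L_A(s,\cdot)/(\alpha\varepsilon)}\bigr]\), apply the Gaussian Laplace principle pointwise, pass the limit under \(\mu_0\), and then use a measurable selection together with the nonnegative-integrand argument to get attainment and the almost-everywhere characterization. The only difference is that you prove the Laplace bounds and the domination under a quadratic-growth condition, which the paper's standing hypotheses make unnecessary: bounded continuous \(Q_r,Q_c\) make \(L_A(s,\cdot)\) bounded, so \(\inf_a L_A(s,a)\le g_\varepsilon\le\sup_a L_A(s,a)\) gives the domination immediately, and comparing with \(a=x_0\) gives \(\tfrac{\alpha}{2}\|T(x_0)-x_0\|^2\le L_A(s,x_0)-\inf_a L_A(s,a)\), hence \(T\in L^2(\mu_0;\mathbb R^{d_a})\).
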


Without noise, \cref{prob:control} thus reduces to choosing one action per
source draw: its value converges to that of \cref{eq:vanishing-noise-map}, and
an optimal map sends each source draw to a minimizer of its total energy.

The limit does not remove the randomness of the policy.  The positive-noise
solution of \cref{eq:free-energy} is random for two reasons: the source draw
\(x_0\) is random, and, given \(x_0\), the entropy term spreads the actions.
The limit problem of \cref{thm:vanishing} removes only the second.  The first remains because the displacement
makes the minimizer in \cref{eq:prox-map} depend on \(x_0\): for differentiable
\(L_A\), the optimal map sends distinct source draws to distinct actions
(\cref{app:injective,app:claims}).  The minimizers of \cref{prob:direct}, by
contrast, do not depend on \(x_0\), so \cref{prob:direct} can send every source
draw to the same action (\cref{sec:flow}).  In the noise-free limit, the kinetic
regularization thus acts through the source draws, geometrically, rather than
through the entropy of each conditional distribution.

\FloatBarrier

\section{Practical Online Algorithm}
\label{sec:method}
\label{sec:practical}

RAFALE minimizes the objective of \cref{thm:vanishing}, the noise-free limit of
the action-space problem \cref{eq:free-energy}, over maps given by a
velocity network \(v_\theta\) and one explicit midpoint step.  The step, which FLAC also uses, requires two network evaluations:
\begin{equation}
X_1=x_0+\delta_\theta(s,x_0),\qquad
\delta_\theta(s,x_0)=v_\theta\!\left(s,x_0+\tfrac12v_\theta(s,x_0,0),\tfrac12\right).
\label{eq:midpoint-main}
\end{equation}
The same evaluations give the kinetic energy of the step,
\begin{equation}
K_\theta(s,x_0):=\tfrac12\|\delta_\theta(s,x_0)\|^2
=\tfrac12\|X_1-x_0\|^2=\mathcal E(x_0,X_1),
\label{eq:exact-kinetic}
\end{equation}
which is exactly the displacement \(\mathcal E(x_0,X_1)\) of
\cref{thm:vanishing}, so the one-step sampler evaluates the regularizer of the
limit objective without approximation.  Evaluating \cref{eq:vanishing-noise-map}
at the midpoint map and averaging over replay states therefore gives the actor
loss (\cref{app:solver-identity}):
\begin{equation}
\mathcal L(\theta):=\E_{s\sim\mathcal D,\,x_0\sim\mu_0}\Big[-Q_r(s,X_1)
+\Phi_{\lambda,\rho}\big(Q_c(s,X_1)-h\big)
+\alpha K_\theta(s,x_0)\Big].
\label{eq:actor-loss}
\end{equation}
We estimate the loss with one source draw per replay state and backpropagate
through both velocity evaluations, so neither the Gibbs density of
\cref{thm:bridge-laws} nor its score is estimated during training.  

To use \cref{eq:actor-loss} online, we learn its critics from replay and adjust
the multiplier using completed episodes; \cref{alg:method} in \cref{app:algorithm} summarizes the learner.  Two reward critics
use clipped double Q-learning \citep{fujimoto2018td3}, with their minimum
supplying \(Q_r\), alongside a single cost critic \(Q_c\).  Following FLAC
\citep{lv2026flac}, the reward backup subtracts \(\alpha\) times the kinetic
energy of the next action, so \(Q_r\) is a kinetic-energy-regularized reward
critic.  The multiplier instead follows a projected
proportional--integral rule driven by measured episode costs
\citep{stooke2020pid}.  Data collection and evaluation
use the same midpoint sampler, without candidate selection.

\paragraph{Kinetic-energy adaptation.}
Because kinetic energy varies across tasks and training stages, a fixed
\(\alpha\) can constrain the generator too strongly or too weakly.  We therefore
use the automatic tuning rule of FLAC, updating \(\log\alpha\) by gradient
descent on
\begin{equation}
\mathcal L_\alpha=\log\alpha\,\big(K_{\mathrm{tgt}}-\stopgrad(\widehat K)\big),
\qquad K_{\mathrm{tgt}}=1.125\,d_a,
\label{eq:log-alpha-loss}
\end{equation}
where \(\widehat K\) is the minibatch mean kinetic energy and \(d_a\) is the
action dimension.  Energy above the target increases \(\alpha\) and strengthens
the regularizer, while energy below it decreases \(\alpha\).

\begin{table}[!htbp]
\caption{Final episodic reward and cost (mean \(\pm\) SD, five seeds; \(b=10\),
or \(25\) on CarGoal); RESPO trains for 9M--10M environment steps, the others
for 1M--3M (\cref{app:experiments}).  Red: mean cost above budget.  Among
entries within budget, bold blue and bold green mark the highest and
second-highest reward.}
\label{tab:main-results}
\centering
\scriptsize
\renewcommand{\arraystretch}{0.85}
\setlength{\tabcolsep}{3pt}
\begin{tabular}{llccccc}
\toprule
Task & Metric & RESPO & CAL & RCRL & ALGD & RAFALE (ours) \\
\midrule
\multirow{2}{*}{Swimmer} & Reward & \(36\pm3\) & \cellcolor{infeasiblebg}\(41\pm3\) & \(20\pm15\) & \textcolor{secondfont}{\(\bm{52\pm13}\)} & \textcolor{bestfont}{\(\bm{93\pm23}\)} \\
 & Cost & \(7.5\pm0.6\) & \cellcolor{infeasiblebg}\(16.6\pm6.7\) & \(4.0\pm7.6\) & \(2.9\pm3.5\) & \(5.4\pm2.7\) \\
\specialrule{0.3pt}{0.6pt}{0.6pt}
\multirow{2}{*}{HalfCheetah} & Reward & \textcolor{secondfont}{\(\bm{2344\pm320}\)} & \cellcolor{infeasiblebg}\(2650\pm111\) & \cellcolor{infeasiblebg}\(1796\pm2493\) & \cellcolor{infeasiblebg}\(2817\pm69\) & \textcolor{bestfont}{\(\bm{3025\pm55}\)} \\
 & Cost & \(9.8\pm5.9\) & \cellcolor{infeasiblebg}\(15.5\pm14.7\) & \cellcolor{infeasiblebg}\(195.2\pm434.8\) & \cellcolor{infeasiblebg}\(22.4\pm44.7\) & \(3.5\pm4.7\) \\
\specialrule{0.3pt}{0.6pt}{0.6pt}
\multirow{2}{*}{Hopper} & Reward & \textcolor{secondfont}{\(\bm{1073\pm308}\)} & \cellcolor{infeasiblebg}\(1664\pm24\) & \(1037\pm575\) & \cellcolor{infeasiblebg}\(1702\pm15\) & \textcolor{bestfont}{\(\bm{1655\pm46}\)} \\
 & Cost & \(3.1\pm5.1\) & \cellcolor{infeasiblebg}\(12.2\pm6.2\) & \(0.2\pm0.3\) & \cellcolor{infeasiblebg}\(15.7\pm12.3\) & \(0.3\pm0.4\) \\
\specialrule{0.3pt}{0.6pt}{0.6pt}
\multirow{2}{*}{Walker2d} & Reward & \cellcolor{infeasiblebg}\(2925\pm66\) & \textcolor{secondfont}{\(\bm{2894\pm56}\)} & \cellcolor{infeasiblebg}\(1758\pm871\) & \cellcolor{infeasiblebg}\(2920\pm133\) & \textcolor{bestfont}{\(\bm{3177\pm26}\)} \\
 & Cost & \cellcolor{infeasiblebg}\(10.6\pm3.2\) & \(6.3\pm3.8\) & \cellcolor{infeasiblebg}\(10.1\pm16.0\) & \cellcolor{infeasiblebg}\(17.1\pm15.2\) & \(1.3\pm1.3\) \\
\specialrule{0.3pt}{0.6pt}{0.6pt}
\multirow{2}{*}{Ant} & Reward & \textcolor{secondfont}{\(\bm{1950\pm75}\)} & \cellcolor{infeasiblebg}\(2815\pm265\) & \cellcolor{infeasiblebg}\(2332\pm1292\) & \cellcolor{infeasiblebg}\(2218\pm1735\) & \textcolor{bestfont}{\(\bm{3427\pm62}\)} \\
 & Cost & \(5.6\pm1.1\) & \cellcolor{infeasiblebg}\(11.1\pm9.4\) & \cellcolor{infeasiblebg}\(76.3\pm169.7\) & \cellcolor{infeasiblebg}\(16.8\pm11.9\) & \(4.4\pm1.7\) \\
\specialrule{0.3pt}{0.6pt}{0.6pt}
\multirow{2}{*}{Humanoid} & Reward & \textcolor{secondfont}{\(\bm{5380\pm900}\)} & \(5289\pm167\) & \cellcolor{infeasiblebg}\(5640\pm705\) & \(4530\pm829\) & \textcolor{bestfont}{\(\bm{6513\pm27}\)} \\
 & Cost & \(1.5\pm1.1\) & \(0.1\pm0.2\) & \cellcolor{infeasiblebg}\(15.2\pm34.0\) & \(0.2\pm0.5\) & \(3.2\pm3.1\) \\
\specialrule{0.3pt}{0.6pt}{0.6pt}
\multirow{2}{*}{CarGoal} & Reward & \textcolor{bestfont}{\(\bm{20.0\pm1.3}\)} & \(12.1\pm1.7\) & \cellcolor{infeasiblebg}\(1.2\pm0.8\) & \(16.5\pm2.6\) & \textcolor{secondfont}{\(\bm{18.3\pm1.6}\)} \\
 & Cost & \(20.8\pm3.9\) & \(12.3\pm2.6\) & \cellcolor{infeasiblebg}\(26.1\pm11.6\) & \(22.4\pm4.8\) & \(20.7\pm3.3\) \\
\midrule
\multicolumn{2}{l}{Tasks within budget} & 6/7 & 3/7 & 2/7 & 3/7 & \textbf{7/7} \\
\bottomrule
\end{tabular}
\end{table}

\begin{figure}[t]
\centering
\includegraphics[width=0.85\textwidth]{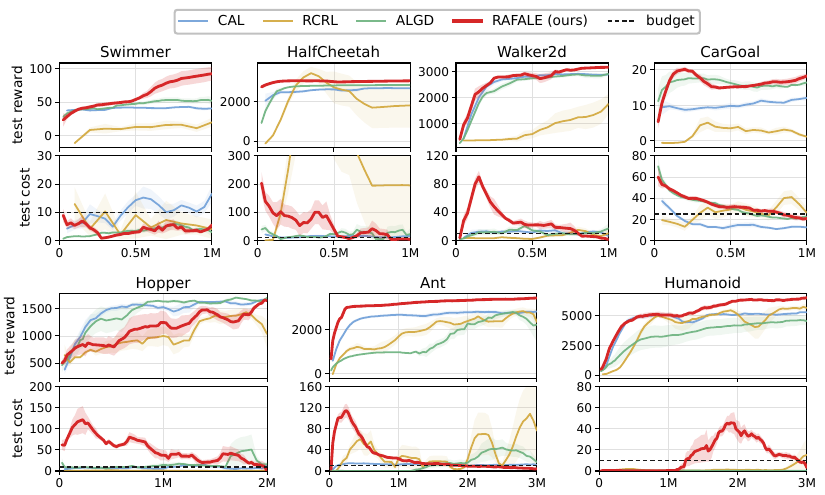}
\caption{Learning curves (mean \(\pm1\) standard error over five seeds): test
reward (upper rows) and test cost (lower rows) against environment steps.
Dashed lines mark the budget, and cost axes are truncated.  RESPO, trained for
up to 10M steps, appears in \cref{fig:curves-log}.}
\label{fig:curves-offpolicy}
\end{figure}

\section{Experiments}
\label{sec:experiments}

We test RAFALE on six velocity-constrained locomotion tasks and the navigation
task CarGoal from Safety-Gymnasium \citep{ji2023safetygymnasium}.  The episodic
cost budget is \(b=10\) for locomotion and \(b=25\) for CarGoal.  We report
final-policy reward and cost as mean \(\pm\) standard deviation over five
seeds; a run is feasible when its mean evaluation cost is at most \(b\).  All
methods are evaluated at a fixed endpoint checkpoint (\cref{app:experiments}).

\paragraph{Baselines.}
CAL \citep{wu2024cal} and ALGD \citep{cheng2026algd} are off-policy
primal--dual baselines with Gaussian and diffusion actors.  RCRL
\citep{yu2022rcrl} and RESPO \citep{ganai2023respo} instead use state-wise
reachability constraints, a stricter safety objective than an expected-cost
budget; RCRL is off-policy, and RESPO is on-policy with substantially more
environment steps.

\subsection{Main Results}
\label{sec:main-results}

RAFALE is the only method with mean final cost within budget on every task
(\cref{tab:main-results}).  Among the entries within budget, it has the highest
reward on all locomotion tasks, and RESPO has the highest reward on CarGoal.
CAL and ALGD attain competitive reward, but each exceeds the budget on more than
half of the tasks.  The reachability-based methods obtain lower reward on
locomotion, even with the longer training schedule of RESPO; RESPO is feasible
on nearly every task, whereas RCRL exceeds the budget on most of them.  \Cref{fig:curves-offpolicy} also shows transient cost excursions during
training (\cref{sec:conclusion}).

\subsection{Ablation Studies}
\label{sec:ablations}

We ablate both components on Hopper, Walker2d, and Humanoid (protocols in
\cref{app:ablation-protocol}).

\paragraph{Augmented versus standard Lagrangian.}
To separate activation at a threshold from growth above it, we compare \(L_A\)
with the standard Lagrangian \(L\) and a hinge control
\(L_C:=-Q_r+\lambda[Q_c-\tau]_+\), which keeps the threshold of \(L_A\) but
grows linearly above it.  All other learner settings, including the multiplier
update rule, are shared.  On Hopper and Walker2d, the
augmented objective gives higher final reward and lower peak and integrated
excess of evaluated cost (\cref{tab:gate-ablation-full}).  On Humanoid, final
reward and integrated excess are similar, but the augmented objective is the
only variant whose five final runs are all feasible.  The hinge variant has a
higher peak cost on every task, which suggests that the improvement depends on
how the constraint term grows above the threshold, not only on the threshold
itself.

\paragraph{Effect of the actor family.}
To compare actor families under the same constraint mechanism, we replace the
flow actor with tanh-Gaussian and eight-component tanh-GMM actors, each trained
with its native regularizer, and leave the remaining settings unchanged.  The
flow actor attains the lowest mean final cost on all three tasks with
comparable reward (\cref{tab:actor-family}), although its cost advantage over
the GMM is small on Walker2d.  Only the flow configuration has feasible final
policies in every run, so the flow actor satisfies the budget more reliably
without sacrificing reward.

\begin{table}[!htbp]
\caption{Lagrangian ablation (five seeds; mean \(\pm\) SD where shown): final
reward and cost, peak evaluated cost (Peak), integrated excess above budget
(Excess), and feasible final seeds (Feas.).  Bold blue: highest reward; bold
black: lowest Peak and Excess, most feasible seeds.}
\label{tab:gate-ablation-full}
\centering
\scriptsize
\renewcommand{\arraystretch}{0.88}
\setlength{\tabcolsep}{2.8pt}
\begin{tabular}{llrrrrc}
\toprule
Task & Variant & Reward & Cost & Peak & Excess & Feas. \\
\midrule
\multirow{3}{*}{Hopper}
& Augmented \(A\) (RAFALE) & \textcolor{bestfont}{\(\bm{1655\pm46}\)} & \(0.3\pm0.4\) & \(\bm{174.6}\) & \(\bm{92.3\pm15.9}\) & 5/5 \\
& Hinge \(C\) & \(1547\pm33\) & \(0.8\pm1.2\) & 902.9 & \(251.2\pm30.3\) & 5/5 \\
& Standard \(U\) & \(1208\pm573\) & \(0.0\pm0.1\) & 767.8 & \(240.6\pm70.2\) & 5/5 \\
\midrule
\multirow{3}{*}{Walker2d}
& Augmented \(A\) (RAFALE) & \textcolor{bestfont}{\(\bm{3177\pm26}\)} & \(1.3\pm1.3\) & \(\bm{125.9}\) & \(\bm{20.3\pm3.8}\) & 5/5 \\
& Hinge \(C\) & \(3075\pm89\) & \(0.0\pm0.0\) & 390.1 & \(55.9\pm28.4\) & 5/5 \\
& Standard \(U\) & \(2939\pm65\) & \(1.2\pm2.1\) & 443.8 & \(83.7\pm42.9\) & 5/5 \\
\midrule
\multirow{3}{*}{Humanoid}
& Augmented \(A\) (RAFALE) & \(6513\pm27\) & \(3.2\pm3.1\) & \(\bm{81.8}\) & \(32.3\pm7.7\) & \textbf{5/5} \\
& Hinge \(C\) & \(6540\pm28\) & \(9.1\pm4.3\) & 128.9 & \(32.3\pm12.1\) & 3/5 \\
& Standard \(U\) & \textcolor{bestfont}{\(\bm{6543\pm11}\)} & \(8.0\pm13.9\) & 147.3 & \(33.7\pm12.7\) & 4/5 \\
\bottomrule
\end{tabular}
\end{table}

\begin{table}[!htbp]
\caption{Actor families with their native regularizers: final reward / cost
(mean \(\pm\) SD, five seeds; feasible seeds in parentheses).  Bold blue:
highest reward; bold black: lowest cost.}
\label{tab:actor-family}
\centering
\scriptsize
\renewcommand{\arraystretch}{0.9}
\setlength{\tabcolsep}{4pt}
\begin{tabular}{lccc}
\toprule
Task & Flow (RAFALE) & Gaussian & GMM \\
\midrule
Hopper   & \({\color{bestfont}\bm{1655\!\pm\!46}}\;/\;\bm{0.3\!\pm\!0.4}\) (5) & \(1408\!\pm\!506\;/\;7.6\!\pm\!17.0\) (4) & \(1645\!\pm\!51\;/\;1.5\!\pm\!2.4\) (5) \\
Walker2d & \(3177\!\pm\!26\;/\;\bm{1.3\!\pm\!1.3}\) (5) & \(3151\!\pm\!15\;/\;3.5\!\pm\!4.0\) (4)  & \({\color{bestfont}\bm{3190\!\pm\!17}}\;/\;1.4\!\pm\!1.1\) (5) \\
Humanoid & \({\color{bestfont}\bm{6513\!\pm\!27}}\;/\;\bm{3.2\!\pm\!3.1}\) (5) & \(6510\!\pm\!51\;/\;8.1\!\pm\!2.6\) (4)  & \(6503\!\pm\!116\;/\;8.5\!\pm\!5.1\) (3) \\
\bottomrule
\end{tabular}
\end{table}

\FloatBarrier

\section{Conclusion and Limitations}
\label{sec:conclusion}

We introduced RAFALE by formulating a constrained flow update as a one-ended
Schr\"odinger bridge, whose reference-process regularizer becomes a computable
kinetic-energy term.  Our analysis shows that, for each
source draw, this path-space problem is exactly an entropy-regularized problem
in action space.  Its Gibbs solution
lowers the probability of actions whose estimated cost exceeds a threshold
set by the multiplier, and its vanishing-noise limit gives the objective of our midpoint
actor.  Across the tested Safety-Gymnasium tasks,
RAFALE achieves competitive reward with mean final cost within budget, and the
ablations support both the augmented objective and the flow actor.

These final-policy results do not ensure safe exploration, since intermediate
policies exceed the budget, and our analysis concerns a single actor update
rather than the coupled actor--critic--dual learner.  Controlling transient
violations, analyzing this coupled learning process, and moving beyond
simulation to real systems are important next steps.  Finally, a Gaussian actor remains
the simpler choice when it already balances reward and safety; the flow
is expected to matter most when the action distribution is multimodal.

\section*{Reproducibility Statement}

\Cref{sec:background,sec:bridge} define the update, and \cref{app:proofs,app:claims}
give the proofs and their assumptions.  \Cref{app:algorithm,app:experiments}
specify the learner, task configurations, baseline implementations, and
evaluation protocols.  We will release the training and evaluation code,
including the baseline re-implementations.

\section*{AI Use Statement}
In this work, we used generative AI tools to help develop the theoretical
framework, to formulate mathematical claims and assist in proving them, to
propose and refine hypotheses, to design and critique the experimental
methodology, to implement the method, the baselines, and the evaluation code,
and to interpret results.  We have not used generative AI tools to generate
synthetic data; translation, dataset cleaning, and qualitative data analysis are
not applicable to this work.  Additionally, we used generative AI tools to search
for related work and to draft and edit sections of the paper.  We have reviewed
all AI-assisted work, and we take
responsibility for the final content of this work, including text, claims, and
artifacts produced with the aid of generative AI.

\bibliographystyle{iclr2027_conference}
\bibliography{references}

\raggedbottom
\makeatletter\setlength{\@fptop}{0pt}\setlength{\@fpsep}{10pt plus 2pt}\setlength{\@fpbot}{0pt plus 1fil}\makeatother
\renewcommand{\topfraction}{0.9}\renewcommand{\bottomfraction}{0.9}\renewcommand{\textfraction}{0.05}\renewcommand{\floatpagefraction}{0.85}
\clearpage
\appendix
\crefalias{section}{appendix}
\crefalias{subsection}{appendix}

\FloatBarrier
\section{Extended Related Work}
\label{app:related}

\paragraph{A common frame.}
A diffusion or flow policy generates an action by transforming a source draw
\(x_0\sim\mu_0\) through a sequence of learned steps; write the result as
\(a=g_\theta(s,x_0)\).  A critic \(Q(s,a)\) scores actions.  In reward-only
RL, most methods aim at the regularized target policy
\begin{equation}
\pi^*(a\mid s)\propto\nu(a\mid s)\exp\big(Q(s,a)/\beta\big),
\label{eq:target-policy}
\end{equation}
which favors high-value actions while staying close to a reference policy
\(\nu\), such as a uniform distribution or the previous policy; \(\beta>0\) is
a temperature \citep{gao2026flowrltaxonomy}.  The methods differ in which
signal of the critic reaches \(\theta\), and four routes are common.  In safe
RL, the negative augmented Lagrangian \(-L_A\) of \cref{eq:aug} plays the role
of \(Q\), so each route extends to the constrained setting.

\paragraph{1) Q-value guidance.}
The target of \cref{eq:target-policy} has a simple score, the gradient of its
log-density:
\[
\nabla_a\log\pi^*(a\mid s)=\nabla_a\log\nu(a\mid s)+\nabla_aQ(s,a)/\beta .
\]
A diffusion policy is trained through its score network, so Q-score matching
trains this network toward the critic's action gradient
\citep{psenka2024qsm}.  The critic thus tells each action in which
direction its value increases.  A diffusion model, however, needs the score of
the target after noise has been added at each step of its chain, and this
noisy score is not \(\nabla_aQ/\beta\).  It must be approximated, learned with
auxiliary networks, or estimated from candidate actions, for example by
importance weighting \citep{gao2026flowrltaxonomy}.

\paragraph{2) Weighted matching.}
These methods turn policy improvement into weighted imitation.  They draw
candidate actions \(a_1,\dots,a_m\), from the current policy or the replay
buffer, weight each by \(w_i\propto\exp(Q(s,a_i)/\beta)\), and train the model
with its usual loss under these weights,
\[
\min_\theta\;\E_s\Big[\textstyle\sum_{i=1}^m w_i\,\ell_\theta(s,a_i)\Big],
\]
where \(\ell_\theta(s,a)\) is the denoising or flow-matching loss of the model
for the action \(a\) \citep{ma2025rsm,li2026rfm,kim2026flag}.  High-value
actions are thus imitated more.  The route fits an intermediate regression target built from
candidate actions and their weights; some estimators, such as those of RFM,
also use the critic's action gradients through Stein control variates
\citep{li2026rfm}.  With
exponential weights, the minimizer of the weighted loss is the target of
\cref{eq:target-policy} \citep{ma2025rsm,gao2026flowrltaxonomy}.  Flow-Based Policy combines
replay-weighted flow matching with a pathwise max-\(Q\) term
\citep{lv2025flowrl}.

\paragraph{3) Policy gradient.}
These methods update the generative policy like any stochastic policy, with a
PPO-style clipped objective
\[
\E\Big[\min\big(r_\theta\hat A,\;\mathrm{clip}(r_\theta,1-\kappa,1+\kappa)\,\hat A\big)\Big],
\qquad r_\theta=\frac{\pi_\theta(a\mid s)}{\pi_{\mathrm{old}}(a\mid s)},
\]
which raises the probability of actions with positive advantage \(\hat A\); the
clipping range \(\kappa\) keeps each update small.  A flow or diffusion policy
has no tractable density, so \(r_\theta\) is replaced by a surrogate: FPO uses
\(\exp(\ell_{\mathrm{old}}(s,a)-\ell_\theta(s,a))\), where \(\ell\) is a Monte
Carlo estimate of the conditional flow-matching loss of the action, and the
update is on-policy \citep{mcallister2025fpo,yi2026fpocontrol}.

\paragraph{4) Reparameterization.}
These methods treat the sampler as a differentiable function
\(a=g_\theta(s,x_0)\) and raise the critic value of the generated action by the
chain rule,
\[
\nabla_\theta Q\big(s,g_\theta(s,x_0)\big)
=\Big(\frac{\partial g_\theta(s,x_0)}{\partial\theta}\Big)^{\!\top}
\nabla_aQ(s,a)\Big|_{a=g_\theta(s,x_0)},
\]
as SAC does for a Gaussian actor \citep{haarnoja2018sac}.  The critic's action
gradient reaches \(\theta\) directly, without an estimated score or weights
\citep{wang2024dacer,celik2025dime,lv2026flac}.  The difficulty is
regularization: the entropy regularization of SAC needs the action log-density, which a
multi-step sampler does not readily provide.  DIME estimates the entropy along
the sampling chain \citep{celik2025dime}, DACER regularizes a tractable proxy
\citep{wang2024dacer}, and FLAC replaces the entropy by the kinetic energy of
the generation path, which needs no density \citep{lv2026flac}.  One-step flow
policies shorten the chain to be differentiated
\citep{li2026flame,zhou2026trfp}, and adjoint matching replaces
backpropagation through it \citep{thilges2026adjoint}.  RAFALE belongs to this
family.  FLAC is the closest prior work: both differentiate a
kinetic-energy-regularized flow path directly, with the reward critic as
the terminal potential there and the augmented Lagrangian \(L_A\) here.

\begin{table}[H]
\centering
\small
\begin{tabular}{@{}>{\raggedright\arraybackslash}p{2.45cm}>{\raggedright\arraybackslash}p{2.95cm}>{\raggedright\arraybackslash}p{3.7cm}>{\raggedright\arraybackslash}p{3.25cm}@{}}
\toprule
Route & Critic signal & What is trained & Extra requirement\\
\midrule
Q-value guidance & \(\nabla_aQ\) & score network, toward \(\nabla_aQ/\beta\) & scores at every noise level\\
Weighted matching & \(Q\) of each candidate (also \(\nabla_aQ\) in RFM) & weighted denoising or flow-matching loss & candidate actions and weights\\
Policy gradient & advantage & clipped ratio objective, on-policy & surrogate for the density ratio\\
Reparameterization & \(\nabla_aQ\) at the generated action & sampler, by backpropagation & density-free regularizer\\
\bottomrule
\end{tabular}
\end{table}

\paragraph{Safe RL with Gaussian actors.}
Primal--dual safe RL enforces an expected cumulative-cost budget by alternating
policy and multiplier updates \citep{altman1999cmdp,achiam2017cpo,stooke2020pid};
alternatives replace the multiplier by state augmentation with the remaining
budget \citep{sootla2022saute} or extract a constrained policy by
KL-regularized variational inference \citep{liu2022cvpo}.  The projected form
of the augmented Lagrangian, its active-set threshold, and the pointwise
multiplier estimate it induces are classical
\citep{rockafellar1973multiplier,bertsekas1982constrained,hintermuller2006pathfollowing},
and CAL applies the projected augmentation to a policy-level expected residual
\citep{wu2024cal}.  State-wise methods such as RCRL and RESPO instead
constrain safety through a learned reachability constraint
\citep{yu2022rcrl,ganai2023respo}.  In the frame above, PID-Lag and RESPO
update Gaussian actors by policy gradient, and CAL and RCRL by
reparameterization (\cref{tab:positioning}).

\paragraph{Safe RL with generative actors.}
Offline, generative policies have been combined with hard state-wise safety:
FISOR extracts a diffusion policy by feasibility-guided weighted regression, a
form of weighted matching \citep{zheng2024fisor}, and EpiFlow and SafeFQL train
flow policies against reachability-derived value functions, by
epigraph-reweighted flow matching and by behavior cloning with a distilled
one-step safe actor, respectively \citep{tayal2026epiflow,tayal2026safefql};
none of these trains online.  Online, ALGD follows the Q-value-guidance route
with the augmented Lagrangian: it takes \(L_A\) of \cref{eq:aug} as the energy
of a Boltzmann target policy, estimates the score of that target at each noise
level by a normalized, energy-weighted Monte Carlo average over candidate
actions, and trains the diffusion score network to match this estimate
\citep{cheng2026algd}.  RAFALE keeps the expected-cost setting and the same
\(L_A\) but follows the reparameterization route: it evaluates \(L_A\) at each
action generated by the flow and passes its gradient to the flow parameters
through the sampler.  We
adopt the augmented Lagrangian for the stability it brings.

\paragraph{Schr\"odinger-bridge view of policy updates.}
A one-ended generalized Schr\"odinger bridge starts a noisy process from a
fixed source distribution, lets its endpoint adapt to an objective evaluated there, and
keeps the whole path close to a reference process, here Brownian motion.
Closeness is measured by a KL divergence between path distributions, whose
value, scaled by the noise level, equals the expected kinetic energy of the
drift by Girsanov's theorem.  The optimal path
distribution reweights the reference by a Gibbs factor of that objective,
and as the noise vanishes the problem becomes deterministic least-energy
transport
\citep{todorov2006lsmdp,leonard2014survey,leonard2012limit,chen2016sb,zhang2022pis};
generalized Schr\"odinger bridge matching adds task-specific state costs while
retaining prescribed endpoint marginals \citep{liu2024gsbm}, whereas the
one-ended formulation used here fixes only the source and shapes the terminal
distribution through a terminal potential.  In reinforcement learning,
FLAC derives kinetic-energy regularization for a reward-only flow actor from a
one-ended formulation, and SoftGAC lifts the endpoint maximum-entropy objective to a path
KL for a reward-only bridge actor, decomposing that KL at the endpoint and
deriving the conditional Gibbs optimizer for a fixed base
\citep{lv2026flac,he2026softgac}.  The static form of the
Schr\"odinger problem, Gibbs tilting, its small-noise limit, the augmented
Lagrangian, and kinetic-energy regularization are thus all prior art.  What is
new here is their combination for the constrained update.  With the Brownian
reference, the endpoint term of this decomposition becomes an explicit
displacement plus a conditional entropy (\cref{thm:action-form}), so the
path-space problem is, for each source draw, an entropy-regularized problem in
action space.  The safety tilt holds
conditionally on each source draw because the source distribution is fixed, and
the resulting map objective is evaluated without approximation by the midpoint
loss of RAFALE.

\FloatBarrier
\section{Proofs and Additional Frozen-Update Results}
\label{app:proofs}

\FloatBarrier
\subsection{Proofs of \texorpdfstring{\cref{thm:action-form,thm:bridge-laws,thm:vanishing}}{Theorems 1 to 3}}
\label{app:proof-bridge}

The proofs use the explicit form of the augmented term.  Expanding the
projected square in \cref{eq:aug} gives
\begin{equation}
\Phi_{\lambda,\rho}(Q_c-h)=\frac{\rho}{2}\,[Q_c-\tau]_+^2-\frac{\lambda^2}{2\rho},
\qquad
\tau=h-\frac{\lambda}{\rho},
\label{eq:penalty-hinge}
\end{equation}
so \(L_A=-Q_r+\frac{\rho}{2}[Q_c-\tau]_+^2\) up to the constant
\(-\lambda^2/(2\rho)\), which is \cref{eq:threshold}.

\begin{theorem}[Formal version of \cref{thm:action-form,thm:bridge-laws,thm:vanishing}]
\label{thm:bridge-laws-formal}
Assume that \(\mathcal S\) is Polish (a complete separable metric space),
that \(Q_r\) and \(Q_c\) are bounded and continuous, and that \(\mu_0\)
has a finite second moment.  Let \(q_A^\varepsilon(\cdot\mid s,x_0)\) and
\(q_R^\varepsilon(\cdot\mid s,x_0)\) denote the terminal distributions of
\cref{eq:bridge-objective} for \(L_A\) and for \(-Q_r\) in place of \(L_A\), conditional
on the source draw \(X_0=x_0\).  Then, for every state \(s\) and every
\(\varepsilon>0\), the optimal value of \cref{eq:bridge-objective} equals the
right-hand side of \cref{eq:free-energy}, where \(q\) ranges over densities with finite
KL divergence from \(\N(x_0,\varepsilon I)\), so that the expected total energy and
the entropy are finite.  For every source draw \(x_0\), the
optimizer of \cref{eq:bridge-objective} is a Gibbs path measure whose
conditional terminal distribution is \cref{eq:gibbs} and which, given both
endpoints, follows the Brownian bridge of the reference; moreover,
\begin{equation}
\frac{dq_A^\varepsilon}{dq_R^\varepsilon}(a\mid s,x_0)
=\frac{w^\varepsilon(s,a)}
      {\E_{a'\sim q_R^\varepsilon(\cdot\mid s,x_0)}\,w^\varepsilon(s,a')},
\label{eq:tilt-formal}
\end{equation}
with \(w^\varepsilon\) as in \cref{eq:safety-tilt}; and the optimal value
converges as \(\varepsilon\downarrow0\) to the value of the least-energy map
problem in \cref{eq:vanishing-noise-map}, the infimum ranging over measurable maps with
finite expected squared displacement, equivalently
\(T(s,\cdot)\in L^2(\mu_0;\mathbb R^{d_a})\) because \(\mu_0\) has a finite second
moment; the infimum is attained, exactly by the
maps satisfying \cref{eq:prox-map} for \(\mu_0\)-almost every \(x_0\).
\end{theorem}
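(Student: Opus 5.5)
The proof proceeds at a fixed state \(s\), where the problem is a KL-regularized optimization over path measures with a fixed initial marginal.

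\textbf{Step 1: reduction to the endpoint.} Disintegrate any admissible \(P\) with \(P_0=\mu_0\) as \(P(d\omega)=\mu_0(dx_0)\,P(d\omega\mid x_0)\), and the reference likewise. Since both share the initial law \(\mu_0\), the chain rule for KL gives \(\KL(P\Vert R^\varepsilon)=\E_{x_0\sim\mu_0}\KL(P^{x_0}\Vert R^{\varepsilon,x_0})\). Next disintegrate each conditional path law along the endpoint \(X_1\). A second application of the chain rule gives \(\KL(P^{x_0}\Vert R^{\varepsilon,x_0})=\KL(q\Vert\N(x_0,\varepsilon I))+\E_{a\sim q}\KL(P^{x_0,a}\Vert R^{\varepsilon,x_0,a})\), where \(q\) is the conditional endpoint law and \(R^{\varepsilon,x_0,a}\) is the Brownian bridge. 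The terminal cost depends only on \(X_1\), so the bridge term is nonnegative and vanishes exactly when one uses the Brownian bridge. This yields the last sentence of \cref{thm:action-form}, and the problem separates into independent per-\(x_0\) problems over \(q\).

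\textbf{Step 2: Gaussian expansion.} Expand \(\KL(q\Vert\N(x_0,\varepsilon I))=-H(q)+\E_q[\|a-x_0\|^2/(2\varepsilon)]+\tfrac{d_a}{2}\log(2\pi\varepsilon)\). Multiplying by \(\alpha\varepsilon\) produces \(\alpha\mathcal E(x_0,a)-\alpha\varepsilon H(q)+\tfrac{\alpha\varepsilon d_a}{2}\log(2\pi\varepsilon)\), which is \cref{eq:free-energy}. Restricting to \(q\) with finite KL keeps every term finite, because \(L_A\) is bounded: \(Q_r,Q_c\) are bounded and \([\cdot]_+^2\) is continuous. Measurability of \(x_0\mapsto q\) and of the minimizer follows by standard measurable-selection arguments on Polish spaces. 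Interchanging the infimum over \(P\) with \(\E_{x_0}\) uses that the Gibbs minimizer below depends measurably on \(x_0\).

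\textbf{Step 3: Gibbs solution and tilt.} Apply the Gibbs variational principle \(\inf_q\{\E_q[f]+\KL(q\Vert\nu)\}=-\log\E_\nu e^{-f}\) with \(\nu=\N(x_0,\varepsilon I)\) and \(f=L_A/(\alpha\varepsilon)\). The minimizer is \(q\propto\nu\,e^{-f}\), which is \cref{eq:gibbs}, and it is unique by strict convexity of KL. The same holds with \(-Q_r\) in place of \(L_A\). Dividing the two densities and using \cref{eq:penalty-hinge}, the constant \(-\lambda^2/(2\rho)\) cancels in normalization, and the ratio is \(w^\varepsilon\) divided by its \(q_R^\varepsilon\)-mean, which is \cref{eq:tilt-formal}. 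Finally, \(Z^\varepsilon\le1\) because \(w^\varepsilon\le1\).

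\textbf{Step 4: vanishing noise, the main obstacle.} By Step 3, \(\mathcal B^\varepsilon(s)=\E_{x_0}[F_\varepsilon(x_0)]\), where \(F_\varepsilon(x_0)=-\alpha\varepsilon\log\int e^{-(L_A+\alpha\mathcal E)/(\alpha\varepsilon)}da\), the \(\log(2\pi\varepsilon)\) constant being absorbed into the Gaussian normalization. Set \(G(x_0,a)=L_A(s,a)+\alpha\mathcal E(x_0,a)\), which is continuous and coercive in \(a\). For the \emph{upper bound}, evaluate at \(q=\N(a^*,\varepsilon I)\) with \(a^*\in\arg\min G\); by uniform continuity of \(G\) on compacts, \(F_\varepsilon\le\min_a G+o(1)\). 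For the \emph{lower bound}, use Laplace's principle: \(\E_q G\ge\min G\) and \(-\alpha\varepsilon H(q)\) is controlled via the Gaussian factor. Concretely, \(F_\varepsilon\ge\min G-\alpha\varepsilon\log\int e^{-\alpha\|a-x_0\|^2/(2\alpha\varepsilon)\cdot c}\,da\), obtained by splitting \(G\ge\min G+\) a quadratic-growth part. The delicate point is uniform domination in \(x_0\) to exchange limit and \(\E_{\mu_0}\). Boundedness of \(L_A\) gives \(\min_a G\le \sup|L_A|\) and \(F_\varepsilon\le \sup|L_A|+C\), while \(F_\varepsilon\ge-\sup|L_A|+O(\varepsilon\log\varepsilon)\). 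Dominated convergence then gives \(\lim\mathcal B^\varepsilon=\E\min_a G\).

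\textbf{Step 5: identification with the map problem.} The map problem's value is \(\ge\E\min_a G\) pointwise, and it is attained by a measurable selection \(T(x_0)\in\arg\min G(x_0,\cdot)\). Such a selection exists because \(G\) is a Carathéodory function with compact level sets. Moreover, \(|T(x_0)-x_0|^2\le 4\sup|L_A|/\alpha\), so \(T\in L^2(\mu_0)\) since \(\mu_0\) has a finite second moment. Conversely, any \(T\) with \(G(x_0,T(x_0))>\min G\) on a set of positive measure has a strictly larger expected objective, which gives the characterization \cref{eq:prox-map}. Girsanov \cref{eq:girsanov}, assumed stated earlier, transfers all of this to \cref{prob:control}.
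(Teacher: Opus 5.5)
\textbf{Same route as the paper, but one step needs fixing.} You disintegrate at \(X_0\) and then at \(X_1\), expand the Gaussian KL, and apply the Gibbs variational principle. You cancel \(-\lambda^2/(2\rho)\) to get the tilt, use a Laplace argument with dominated convergence, and close with a measurable selection and \(\|T(x_0)-x_0\|^2\le 4\sup|L_A|/\alpha\). You apply the Gibbs principle at the endpoint after the reduction, whereas the paper applies it on path space and reads off the endpoint; the two are equivalent.

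\textbf{The step that fails: your lower bound in Step 4.} The inequality \(F_\varepsilon\ge\min G-\alpha\varepsilon\log\int e^{-c\|a-x_0\|^2/(2\varepsilon)}\,da\) requires \(G(x_0,a)\ge\min G+\tfrac{c\alpha}{2}\|a-x_0\|^2\) for every \(a\). This is false at any minimizer \(a^*\neq x_0\), where the left side equals \(\min G\). In general \(x_0\) does not minimize \(G(x_0,\cdot)\); for example, it does not whenever \(L_A(s,\cdot)\) is differentiable at \(x_0\) with nonzero gradient.

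\textbf{How to repair it.} Quadratic growth of \(G-\min G\) holds only away from \(x_0\). Since \(\min G\le G(x_0,x_0)=L_A(s,x_0)\), you get
\(G-\min G\ge\tfrac{\alpha}{2}\|a-x_0\|^2-\operatorname{osc}L_A\ge\tfrac{\alpha}{4}\|a-x_0\|^2\)
for \(\|a-x_0\|\ge R:=2(\operatorname{osc}L_A/\alpha)^{1/2}\). On the ball \(B_R(x_0)\) you may only use \(G-\min G\ge0\). Together these give
\[
\int e^{-(G-\min G)/(\alpha\varepsilon)}\,da\;\le\;|B_R|+(4\pi\varepsilon)^{d_a/2}.
\]
Hence \(\alpha\varepsilon\) times its logarithm tends to zero, uniformly in \(x_0\), and \(\liminf_{\varepsilon\downarrow0} F_\varepsilon\ge\min G\). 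Alternatively, cite the Gaussian Laplace principle (Varadhan's lemma) for bounded continuous \(L_A\), as the paper does.

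\textbf{Domination is simpler with the normalized form.} Take \(F_\varepsilon=-\alpha\varepsilon\log\E_{\N(x_0,\varepsilon I)}e^{-L_A/(\alpha\varepsilon)}\). Then \(\inf L_A\le F_\varepsilon\le\sup L_A\) holds exactly, so no \(O(\varepsilon\log\varepsilon)\) bookkeeping is needed.

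\textbf{Minor point on the transfer to \cref{prob:control}.} Girsanov alone only shows that the infimum over velocity fields is at least \(\mathcal B^\varepsilon\). Equality needs the Gibbs optimizer to be generated by a Markov drift. The paper gets this from the Doob transform \(v^*=\varepsilon\nabla_x\log\psi^\varepsilon\). This is not part of the formal statement, but your closing sentence relies on it.
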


\Cref{eq:tilt-formal} restates \cref{eq:safety-tilt}: the normalizer
\(Z^\varepsilon(s,x_0)\) is the reward-only conditional expectation of
\(w^\varepsilon\), so it depends on \((s,x_0)\) but not on \(a\), and
\(w^\varepsilon\le1\) gives \(Z^\varepsilon\le1\).

\paragraph{Path and endpoint divergence.}
Disintegrating a path distribution at its endpoint gives, whenever
\(\KL(P\Vert R^\varepsilon)<\infty\),
\begin{equation}
\begin{split}
\KL(P\Vert R^\varepsilon)
&=\KL(P_1\Vert R_1^\varepsilon)
+\E_{a\sim P_1}\Big[\KL\big(P(\cdot\mid X_1=a)\,\Vert\,R^\varepsilon(\cdot\mid X_1=a)\big)\Big]\\
&\ge\KL(P_1\Vert R_1^\varepsilon),
\end{split}
\label{eq:endpoint-bound}
\end{equation}
where \(P_1\) and \(R_1^\varepsilon\) are the endpoint distributions.  Bounding
the path divergence therefore bounds the divergence of the action distribution
from the reference endpoint, although the two quantities need not be equal.

\paragraph{Endpoint form.}
Fix \(s\) and \(x_0\), and let \(q\) be the endpoint density of a conditional
path distribution \(P(\cdot\mid s,x_0)\).  Applying \cref{eq:endpoint-bound} to the
conditional distributions, whose reference endpoint is \(\N(x_0,\varepsilon I)\),
gives
\begin{equation}
\E_{P(\cdot\mid s,x_0)}\big[L_A(s,X_1)\big]
+\alpha\varepsilon\,\KL\big(P(\cdot\mid s,x_0)\Vert R_{x_0}^\varepsilon\big)
\;\ge\;\E_{a\sim q}\big[L_A(s,a)\big]
+\alpha\varepsilon\,\KL\big(q\Vert\N(x_0,\varepsilon I)\big),
\label{eq:endpoint-reduction}
\end{equation}
with equality when the paths, given their endpoints, follow the reference
bridges.  Since
\(\alpha\varepsilon\,\KL(q\Vert\N(x_0,\varepsilon I))
=\alpha\,\E_{a\sim q}[\mathcal E(x_0,a)]-\alpha\varepsilon H(q)
+\tfrac{\alpha\varepsilon d_a}{2}\log(2\pi\varepsilon)\), minimizing over \(q\) for each source draw gives \cref{eq:free-energy}, which
proves \cref{thm:action-form}; by the Gibbs variational principle, the minimizer
is the density of \cref{eq:gibbs}.

\paragraph{Problems 2 and 3.}
For a velocity field \(v\), let \(P\) be the path distribution of
\cref{eq:controlled}, with \(P\ll R^\varepsilon\) and finite relative entropy.
Under \(P\),
\begin{equation}
\log\frac{dP}{dR^\varepsilon}
=\frac{1}{\sqrt\varepsilon}\int_0^1 v(s,X_t,t)^\top dW_t^P
+\frac{1}{2\varepsilon}\int_0^1\|v(s,X_t,t)\|^2\,dt,
\label{eq:girsanov-density}
\end{equation}
where \(W^P\) is a Brownian motion under \(P\); both processes start from
\(\mu_0\), so no initial term appears.  Finite expected kinetic energy makes the
stochastic integral a mean-zero martingale, and taking expectations gives
\cref{eq:girsanov} \citep{leonard2014survey}.  The objectives of
\cref{prob:bridge,prob:control} therefore agree for every admissible \(v\).
Their infima agree as well, because the Gibbs optimizer constructed below is
induced by the Markov drift
\(v^*(s,x,t)=\varepsilon\nabla_x\log\psi^\varepsilon(s,x,t)\) with
\(\psi^\varepsilon(s,x,t):=\E\exp[-L_A(s,x+\sqrt\varepsilon\,W_{1-t})/(\alpha\varepsilon)]\),
a Doob transform of the reference.  Its terminal density ratio is bounded above
and away from zero, so it has finite relative entropy and hence finite
energy.

\begin{proof}
Fix the state \(s\); it enters only through \(L_A\).  All feasible
path measures share the source marginal \(\mu_0\), and disintegration with
respect to \(X_0=x_0\) gives
\begin{equation}
\KL(P\Vert R^\varepsilon)
=
\E_{x_0\sim\mu_0}
\KL\!\left(P(\cdot\mid s,x_0)\Vert R_{x_0}^\varepsilon\right).
\label{eq:entropy-chain}
\end{equation}
Here \(R_{x_0}^\varepsilon\) is the conditional Brownian path measure given
\(X_0=x_0\); the fixed state \(s\) is carried as a parameter.

For each source draw, the conditional Gibbs variational identity gives
\begin{align}
&\E_{P(\cdot\mid s,x_0)}L_A(s,X_1)
+\alpha\varepsilon
\KL\!\left(P(\cdot\mid s,x_0)\Vert R_{x_0}^\varepsilon\right)
\nonumber\\
&\quad=
-\alpha\varepsilon\log Z_A(s,x_0)
+\alpha\varepsilon
\KL\!\left(P(\cdot\mid s,x_0)\Vert P_A^*(\cdot\mid s,x_0)\right),
\label{eq:gibbs-identity}
\end{align}
where
\(Z_A(s,x_0)=\E_{R_{x_0}^\varepsilon}
\exp[-L_A(s,X_1)/(\alpha\varepsilon)]\) and
\begin{equation}
\frac{dP_A^*(\cdot\mid s,x_0)}{dR_{x_0}^\varepsilon}
=
\frac{\exp[-L_A(s,X_1)/(\alpha\varepsilon)]}{Z_A(s,x_0)}.
\label{eq:conditional-change}
\end{equation}
The last term is nonnegative and vanishes only at \(P_A^*\), so \(P_A^*\) is
the optimizer.  Since the endpoint of \(R_{x_0}^\varepsilon\) is distributed as
\(\N(x_0,\varepsilon I)\), the endpoint of \(P_A^*\) has the density of
\cref{eq:gibbs}, which is part (a) of \cref{thm:bridge-laws}.

To obtain the reward-relative tilt, divide the conditional terminal density for
\(L_A=-Q_r+\Phi_{\lambda,\rho}(Q_c-h)\) by the corresponding reward-only
density.  The reference and reward terms cancel.  Using
\cref{eq:penalty-hinge}, the constant independent of the terminal state
\(-\lambda^2/(2\rho)\) also cancels against the conditional normalizer,
proving \cref{eq:tilt-formal} and hence \cref{eq:safety-tilt}.

For the small-noise limit, conditional on \((s,x_0)\), the Gaussian Laplace principle
gives
\begin{equation}
-\alpha\varepsilon\log
\E_{X_1\sim\N(x_0,\varepsilon I)}
\exp\!\left[-\frac{L_A(s,X_1)}{\alpha\varepsilon}\right]
\longrightarrow
\inf_a\left\{
L_A(s,a)+\frac{\alpha}{2}\|a-x_0\|^2
\right\}.
\label{eq:laplace-limit}
\end{equation}
Boundedness permits integration over \(\mu_0\), and measurable selection or
minimizing sequences identify the integrated pointwise infimum with the map
infimum of \cref{eq:vanishing-noise-map}.  The pointwise infimum is attained, because \(L_A(s,\cdot)\) is continuous
and the displacement term grows quadratically, and a measurable selection of
minimizers exists by the measurable maximum theorem.  Conversely, a map attains the infimum of
\cref{eq:vanishing-noise-map} only if it satisfies \cref{eq:prox-map} for
\(\mu_0\)-almost every \(x_0\), since the integrand is never smaller than its
pointwise infimum.  When the minimizers are not unique, this value limit does
not require the positive-noise conditional distributions to converge to point
masses; they can converge to mixtures over the minimizers.  A minimizing selection \(T\) has finite expected squared
displacement: comparing with \(a=x_0\) gives
\(\frac{\alpha}{2}\|T(s,x_0)-x_0\|^2\le L_A(s,x_0)-\inf_aL_A(s,a)\), which is
bounded because the critics are.  Conversely, \cref{app:map-lift}
constructs, for any map, positive-noise path measures whose cost converges to
its map objective.  The limit in \cref{eq:laplace-limit} is the standard terminal-cost
Schr\"odinger-to-proximal limit
\citep{leonard2012limit,chen2016sb,li2023kernel}.
\end{proof}

\Cref{app:solver-identity} restricts this limit to the implemented solver map,
and \cref{app:tilt-consequences} derives a consequence of the tilt for the
estimated cost.

\FloatBarrier
\subsection{The Finite-Step Objective Identity}
\label{app:solver-identity}

\begin{corollary}[Objective for the midpoint map]
\label{cor:solver-objective}
Restricted to the midpoint maps of \cref{eq:midpoint-main}, the map objective
in \cref{eq:vanishing-noise-map}, averaged over replay states, equals the
actor loss \(\mathcal L(\theta)\) in \cref{eq:actor-loss}.
\end{corollary}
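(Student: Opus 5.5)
The plan is to verify the identity by direct substitution, with no limiting argument. Fix a state \(s\) and parameters \(\theta\), and define the midpoint map
\[
T_\theta(s,x_0):=x_0+\delta_\theta(s,x_0).
\]
By \cref{eq:midpoint-main}, \(T_\theta(s,x_0)\) is exactly the endpoint \(X_1\) that the sampler produces from the source draw \(x_0\). Evaluating the integrand of \cref{eq:vanishing-noise-map} at \(T=T_\theta(s,\cdot)\) therefore gives two terms: the terminal potential \(L_A(s,X_1)\) and the displacement term \(\alpha\,\mathcal E(x_0,X_1)\).

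The displacement term is handled by \cref{eq:exact-kinetic}. Since \(X_1-x_0=\delta_\theta(s,x_0)\), we have
\[
\mathcal E(x_0,X_1)=\tfrac12\|\delta_\theta(s,x_0)\|^2=K_\theta(s,x_0).
\]
This is an algebraic identity, not a quadrature approximation, so no discretization error appears.

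For the terminal potential I will use the definition \cref{eq:aug} directly:
\[
L_A(s,X_1)=-Q_r(s,X_1)+\Phi_{\lambda,\rho}\big(Q_c(s,X_1)-h\big).
\]
These are exactly the first two terms of \cref{eq:actor-loss}. I will not use the rewritten form \cref{eq:threshold}, because that form drops the constant \(-\lambda^2/(2\rho)\). With \cref{eq:aug} the identity holds exactly, including constants.

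Summing the two pieces and taking \(\E_{x_0\sim\mu_0}\) gives the map objective of \cref{eq:vanishing-noise-map} evaluated at \(T_\theta(s,\cdot)\). Averaging then over \(s\sim\mathcal D\) gives \(\mathcal L(\theta)\). The critics and the multiplier are frozen during this step, as in \cref{thm:bridge-laws-formal}.

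The main point that needs care is admissibility. The restriction only makes sense if \(T_\theta(s,\cdot)\) belongs to the class over which \cref{eq:vanishing-noise-map} is taken. I will check this directly:

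\emph{Measurability.} \(T_\theta(s,\cdot)\) is measurable because \(v_\theta\) is continuous (a neural network).

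\emph{Square integrability.} \(T_\theta(s,\cdot)\in L^2(\mu_0)\) holds when \(v_\theta\) has at most linear growth in \(x\), for example if it is Lipschitz, because \(\mu_0\) has a finite second moment.

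\emph{Finiteness of the loss.} Boundedness of \(Q_r\) and \(Q_c\), as assumed in \cref{thm:bridge-laws-formal}, makes \(L_A\) bounded. Together with finite expected squared displacement, this makes every expectation finite, so \(\mathcal L(\theta)\) is well defined and the exchange of sums and expectations is legitimate.

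If the growth condition on \(v_\theta\) fails, I would simply state the corollary for those \(\theta\) at which \(\E\|\delta_\theta\|^2<\infty\). Outside that set both sides equal \(+\infty\), and the identity holds trivially.

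Finally, I will note what the corollary does and does not claim. Because the midpoint maps form a subset of the admissible maps, the identity also gives
\[
\inf_\theta \mathcal L(\theta)\;\ge\;\E_{s\sim\mathcal D}\Big[\lim_{\varepsilon\downarrow0}\mathcal B^\varepsilon(s)\Big]
\]
by \cref{thm:vanishing}. The corollary asserts only the identity of objectives, not equality of the optima.
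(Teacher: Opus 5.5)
Your proof is correct and follows the paper's argument: substitute the midpoint map \(T(s,x_0)=X_1\) into the map objective of \cref{eq:vanishing-noise-map}, use \(X_1-x_0=\delta_\theta(s,x_0)\) to identify the displacement with \(K_\theta\), and expand \(L_A\) by its definition; your admissibility checks and the one-sided bound on \(\inf_\theta\mathcal L(\theta)\) are correct additions the paper leaves implicit, and the paper adds instead a remark on what automatic differentiation computes at nonsmooth ties. One small slip is harmless: \cref{eq:threshold} as displayed keeps the constant \(-\lambda^2/(2\rho)\), so either form of \(L_A\) gives the identity exactly.
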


\begin{proof}

The midpoint map of \cref{eq:midpoint-main} gives
\(X_1-x_0=\delta_\theta(s,x_0)\), so its kinetic energy is \(K_\theta\) in
\cref{eq:exact-kinetic}; this is the one-interval case of the discretized
kinetic energy in \cref{eq:kinetic-statistic}.  Evaluating the deterministic objective
in \cref{eq:vanishing-noise-map}, averaged over \(s\sim\mathcal D\), at the midpoint map \(T(s,x_0)=X_1\), and
substituting the definition of
\(L_A\), gives exactly \cref{eq:actor-loss}.  Because the critics and coefficients
are frozen during the actor step, automatic differentiation through both velocity
evaluations computes this finite graph's derivative at differentiability points;
at a nonsmooth tie, it follows the subgradient selected by the implementation.
\end{proof}

The identity holds for the implemented discrete map whether or not the solver
tracks the continuous ODE trajectory.

\FloatBarrier
\subsection{Additional Consequence of the Safety Tilt}
\label{app:tilt-consequences}

Fix \((s,x_0)\), let \(a\sim q_R^\varepsilon(\cdot\mid s,x_0)\), and set
\(C=Q_c(s,a)\) and
\[
w(C)=\exp[-\rho(C-\tau)_+^2/(2\alpha\varepsilon)].
\]
The weight is nonincreasing.  We write \(\E_R\) and \(\E_A\) for
expectations under \(q_R^\varepsilon(\cdot\mid s,x_0)\) and
\(q_A^\varepsilon(\cdot\mid s,x_0)\), respectively.  For every
integrable nondecreasing \(\varphi\),
\begin{equation}
\E_A\varphi(C)-\E_R\varphi(C)
=
\frac{\operatorname{Cov}_R(\varphi(C),w(C))}{\E_Rw(C)}
\le0.
\label{eq:frozen-fosd}
\end{equation}
Thus, for each source draw, the tilt shifts probability toward actions with
lower estimated cost: the estimated cost under the augmented conditional
terminal distribution is first-order stochastically no larger than under the
reward-only one, so its mean and the probability of exceeding any cost level
are no larger under the tilt.  The tilt also matches the pathwise update.  The action gradient of \(L_A\) is
\begin{equation}
\nabla_aL_A(s,a)=-\nabla_aQ_r(s,a)+m_A(s,a)\,\nabla_aQ_c(s,a),
\qquad m_A(s,a):=\rho\,[Q_c(s,a)-\tau]_+ ,
\label{eq:weight-A}
\end{equation}
whereas the standard Lagrangian weights \(\nabla_aQ_c\) by \(\lambda\) for every
action.  The weight \(m_A\) is zero below \(\tau\), grows linearly above it, and
equals \(\lambda\) at \(Q_c=h\); it is also the weight in the tilt, since
\(-\alpha\varepsilon\,\nabla_a\log w^\varepsilon(s,a)=m_A(s,a)\,\nabla_aQ_c(s,a)\).

\FloatBarrier
\subsection{Constructive Positive-Noise Lift of a Terminal Map}
\label{app:map-lift}

For a finite-energy measurable map \(T\), let \(P_T^\varepsilon\) be the path measure
of
\begin{equation}
dX_t=(T(s,X_0)-X_0)\,dt+\sqrt\varepsilon\,dW_t,
\qquad t\in[0,1],
\qquad X_0\sim\mu_0.
\end{equation}
Then \(X_1=T(s,X_0)+\sqrt\varepsilon W_1\), so the joint terminal distribution converges
weakly to the pushforward under \((s,x_0)\mapsto(s,T(s,x_0))\), while Girsanov's identity
gives
\begin{equation}
\alpha\varepsilon\KL(P_T^\varepsilon\Vert R^\varepsilon)
=
\frac{\alpha}{2}\E\|T(s,X_0)-X_0\|^2.
\end{equation}
Because \(L_A\) is bounded and continuous, weak convergence also gives
convergence of the expected terminal potential, so the cost of
\(P_T^\varepsilon\) converges to the map objective of \(T\).  The drift depends on the source draw, so
\(P_T^\varepsilon\) is a path distribution admissible in \cref{prob:bridge}, not the
path distribution of a Markov velocity field.

\FloatBarrier
\subsection{Injectivity of the Vanishing-Noise Map}
\label{app:injective}

Let \(T\) attain the infimum of \cref{eq:vanishing-noise-map}, and suppose that
\(L_A(s,\cdot)\) is differentiable.  For \(\mu_0\)-almost every \(x_0\), the
action \(a=T(x_0)\) minimizes \(L_A(s,a)+\alpha\,\mathcal E(x_0,a)\), so its
first-order condition \(\alpha(a-x_0)=-\nabla_aL_A(s,a)\) gives
\begin{equation}
x_0=G(a),\qquad G(a):=a+\nabla_aL_A(s,a)/\alpha .
\label{eq:prox-stationarity}
\end{equation}
The map \(G\) does not depend on \(x_0\), and \(G(T(x_0))=x_0\), so \(T\) is
injective: distinct source draws are sent to distinct actions.  If
\(\nabla_aL_A(s,\cdot)\) is \(L\)-Lipschitz, then \(G\) is
\((1+L/\alpha)\)-Lipschitz, and hence
\begin{equation}
\big\|T(x_0)-T(x_0')\big\|\;\ge\;\frac{\|x_0-x_0'\|}{1+L/\alpha}.
\label{eq:no-collapse}
\end{equation}
The coefficient \(\alpha\) thus bounds how much \(T\) can contract the source
distribution.  The bound vanishes as \(\alpha\to0\), where the map objective
reduces to \cref{prob:direct}, whose minimizers do not depend on \(x_0\).  The
argument concerns an exact minimizer of \cref{eq:vanishing-noise-map}; the
midpoint actor approximates such a map within its parameterized class, and the
minimum of two reward critics can make \(L_A\) nondifferentiable where they tie.

\FloatBarrier
\section{Assumptions and Scope}
\label{app:claims}

The bridge analysis concerns one actor update, with the critics, the replay
distribution, and the coefficients \((h,\lambda,\rho,\alpha)\) held fixed.  The
augmented Lagrangian is evaluated with learned critics, so its activation
threshold is not a certified state-wise safety boundary, and ordering
discounted cost estimates does not by itself establish feasibility for the
undiscounted episode budget.  The full actor--critic--dual learning process is
not a single bridge problem, and the results do not imply coupled convergence
or safe exploration.

Girsanov's identity applies at positive noise, when the controlled and
reference processes share the diffusion coefficient.  A deterministic flow has
zero quadratic variation, whereas a Brownian path does not, so their path
distributions are mutually singular and the path divergence is infinite.  The
deterministic objective is therefore obtained as the small-noise limit of the
scaled-divergence problems, not by assigning a finite Brownian path divergence
to the noise-free flow.  Source sampling still induces an action distribution,
but the displacement term supplies geometric regularization rather than a
lower bound on its entropy.

The finite-step identity of \cref{app:solver-identity} concerns the midpoint map
and its discrete energy.  It does not require the numerical step to reproduce a
continuous optimal trajectory, nor does it imply that the parameterized actor
attains the infimum over all measurable maps.  Backpropagation differentiates
the implemented graph at points of differentiability and uses the subgradient
selected by the implementation at nonsmooth ties.

\section{Practical Actor--Critic Implementation}
\label{app:algorithm}

The learner of \cref{alg:method} combines the flow update with off-policy
critics, an adaptive kinetic coefficient, and a multiplier driven by completed
episodes.

\begin{algorithm}[!htbp]
\caption{RAFALE: actor--critic learning with a pathwise flow-policy update}
\label{alg:method}
\begin{algorithmic}[1]
\small
\Require source distribution \(\mu_0\), budget \(b\), horizon \(H\), coefficient \(\rho\), kinetic target \(K_{\mathrm{tgt}}\), replay buffer \(\mathcal D\)
\State \(h\gets\kappa_H b\); initialize \(v_\theta\), critics \(Q_{r,1},Q_{r,2},Q_c\) and target critics \(Q^-_{r,1},Q^-_{r,2},Q^-_c\), \(z=\lambda=0\), \(\log\alpha\), \(\mathcal W\gets\varnothing\)
\For{each environment step}
    \State draw \(x_0\sim\mu_0\), compute \(X_1=x_0+\delta_\theta(s,x_0)\) by \cref{eq:midpoint-main}, and execute \(a=\tanh(X_1)\)
    \State store \((s,a,r,c,s',d)\) in \(\mathcal D\); when an episode completes, append its cost to \(\mathcal W\)
    \If{an update cycle is due}
        \For{each gradient update}
            \State sample \((s,a,r,c,s',d)\sim\mathcal D\) and \(x_0,x_0'\sim\mu_0\); set \(X_1'=x_0'+\delta_\theta(s',x_0')\), \(K'=K_\theta(s',x_0')\)
            \State regress \(Q_{r,1},Q_{r,2}\) and \(Q_c\) onto
            \Statex \hspace{\algorithmicindent}\hspace{\algorithmicindent}\hspace{\algorithmicindent}
            \(y_r=r+\gamma(1-d)\big[\min_i Q^-_{r,i}(s',X_1')-\alpha K'\big]\), \quad
            \(y_c=c+\gamma(1-d)\,Q^-_c(s',X_1')\)
            \If{the update is an actor step (every second update)}
                \State compute \(X_1=x_0+\delta_\theta(s,x_0)\) by \cref{eq:midpoint-main} and \(K_\theta(s,x_0)\)
                \State update \(\theta\) by backpropagation through both velocity evaluations on
                \Statex \hspace{\algorithmicindent}\hspace{\algorithmicindent}\hspace{\algorithmicindent}\hspace{\algorithmicindent}
                \(\ell(\theta)=-\min_i Q_{r,i}(s,X_1)+\Phi_{\lambda,\rho}\big(Q_c(s,X_1)-h\big)+\alpha K_\theta(s,x_0)\)
                \State update \(\log\alpha\) on \(\mathcal L_\alpha\) of \cref{eq:log-alpha-loss}, and update the target critics
            \EndIf
        \EndFor
    \EndIf
    \If{a multiplier update is due}
        \State compute the mean episodic cost \(\bar C_{\mathcal W}\) over \(\mathcal W\) and update \(z\) and \(\lambda\) by the projected PI rule of \cref{eq:dual-update}
    \EndIf
\EndFor
\end{algorithmic}
\end{algorithm}

\paragraph{Action squashing.}
The main text treats the critics as functions of the solver output.  The implementation squashes solver outputs into the action box with
\(\tanh\) before execution and critic evaluation, so \(Q_r\) and \(Q_c\) in the
main text denote the learned critics composed with the squashing map.
The composition preserves boundedness and continuity, so the assumptions of
\cref{thm:bridge-laws,thm:vanishing} still hold.

\paragraph{Solver and kinetic energy.}
The implementation integrates the flow of \cref{eq:flow-policy} with the
single midpoint step of \cref{eq:midpoint-main}, which uses one velocity evaluation to construct
the midpoint and a second to obtain the displacement; its kinetic energy is \(K_\theta\) in
\cref{eq:exact-kinetic}.  For a general solver whose endpoint update has the
form
\begin{equation}
X_1-x_0=\sum_{\ell=1}^{N}\omega_\ell\,v_\theta(s,X_{t_\ell},t_\ell),
\qquad \omega_\ell\ge0,\qquad \sum_{\ell=1}^{N}\omega_\ell=1,
\label{eq:consistent-quadrature}
\end{equation}
the discretized kinetic energy formed with the same nodes and weights, which
generalizes \cref{eq:exact-kinetic},
\begin{equation}
K_\theta(s,x_0)=\frac12\sum_{\ell=1}^{N}\omega_\ell
\|v_\theta(s,X_{t_\ell},t_\ell)\|^2
\label{eq:kinetic-statistic}
\end{equation}
is a quadrature estimate of the continuous path energy
\(\frac12\int_0^1\|v_\theta(s,X_t,t)\|^2dt\), and Jensen's inequality with
\cref{eq:consistent-quadrature} gives
\begin{equation*}
K_\theta(s,x_0)\ge\frac12\Big\|\sum_{\ell=1}^{N}\omega_\ell\,
v_\theta(s,X_{t_\ell},t_\ell)\Big\|^2=\frac12\|X_1-x_0\|^2=\mathcal E(x_0,X_1),
\end{equation*}
the energy of \cref{thm:vanishing}.  With one midpoint interval this
kinetic energy equals half the squared displacement, \cref{eq:exact-kinetic},
which is the identity behind \cref{cor:solver-objective}; for other
consistent quadratures the actor loss upper-bounds the state-averaged map
objective of \cref{eq:vanishing-noise-map}.

\paragraph{Budget conversion and multiplier schedule.}
The environment budget \(b\) is an undiscounted episode total, whereas the
cost critic uses discount \(\gamma\) and bootstraps at time-limit
truncations.  The
nominal conversion used in \cref{sec:background,sec:practical} is
\begin{equation}
h=\kappa_Hb,\qquad
\kappa_H=\frac{1-\gamma^{H}}{(1-\gamma)H},
\label{eq:budget-conversion}
\end{equation}
a scale calibration rather than an unbiased estimate of the remaining
budget.  Observed episodes,
not the critic, drive the multiplier: the actor uses \(h\) in \(L_A\), but critic error does not
enter the dual signal.  With \(\bar C_{\mathcal W}\) the mean raw cost over a
recent window \(\mathcal W\) of completed episodes and
\(e=\kappa_H(\bar C_{\mathcal W}-b)\) the residual in cost-critic units, the
multiplier follows the projected proportional--integral update
\begin{equation}
z^{+}=\Pi_{[0,\lambda_{\max}]}\big(z+\eta_\lambda e\big),
\qquad
\lambda^{+}=\Pi_{[0,\lambda_{\max}]}\big(z^{+}+\eta_p e\big),
\label{eq:dual-update}
\end{equation}
where \(\Pi_{[0,\lambda_{\max}]}\) projects onto \([0,\lambda_{\max}]\) for a
fixed cap \(\lambda_{\max}\), \(z\) is the integral state,
\(\eta_\lambda>0\) its stepsize, and \(\eta_p\ge0\) an optional proportional
gain \citep{stooke2020pid}; \(\eta_p=0\) recovers integral-only feedback and
is used except in the Humanoid protocol, where \(\eta_p=.05\).  Completed
episodes determine the multiplier \(\lambda\), while the cost critic
determines its effect on each generated action through the weight
\(m_A\) of \cref{eq:weight-A}.  The region that receives no cost
gradient is therefore a property of the learned \(Q_c\), \(\lambda\), and
\(\rho\) together, not of the projected-quadratic form itself.  

\paragraph{Critic targets.}
The learner uses two reward critics \(Q_{r,i}\), \(i\in\{1,2\}\), regularized by the kinetic energy,
and one cost critic, together with target networks updated by exponential
moving averages, denoted by a superscript minus.  For a replay transition
\((s,a,r,c,s',d)\), with \(d=1\) only for a true environment termination
(time-limit truncations continue to bootstrap), we draw \(x_0'\sim\mu_0\),
generate \(a'=X_1'\) at \(s'\) by the midpoint step of \cref{eq:midpoint-main},
and write \(K'=K_\theta(s',x_0')\).  With \(\gamma=.99\), the targets are
\begin{align}
y_r
&=
r+\gamma(1-d)\Big[\min_{i\in\{1,2\}}Q_{r,i}^{-}(s',a')-\alpha K'\Big],
\label{eq:reward-target}\\
y_c
&=
c+\gamma(1-d)\,Q_c^{-}(s',a'),
\label{eq:cost-target}
\end{align}
the reward target being that of FLAC \citep{lv2026flac}.  Both critics
minimize the squared error to their targets; target actions come from the
current actor, and no target actor is maintained.

\paragraph{Kinetic-energy coefficient.}
Adam updates \(\log\alpha\) on \cref{eq:log-alpha-loss}, following the
automatic energy tuning of FLAC \citep{lv2026flac}, and a projection keeps
\(\alpha\ge0.003\).  The realized kinetic energy thus adapts \(\alpha\),
replay transitions train the critics, and completed episodes set \(\lambda\).

\paragraph{Completed-episode feedback.}
The episode window contains at most the ten most recent completed
episodes.  On Swimmer and Walker2d the integral state is warm-started: at the
first multiplier update after the dual warm-up, \(z\) is set to
\(z_{\mathrm w}\) (\cref{tab:dual-configs}) before the update is applied.  Only
the multiplier waits for this window; critic and actor updates continue at the
transition level.

\FloatBarrier
\section{Experimental Details and Additional Results}
\label{app:experiments}

\FloatBarrier
\subsection{Tasks and Evaluation Protocol}
\label{app:configs}

We evaluate RAFALE, CAL, RCRL, and ALGD on the same fixed panel of episode
layouts and policy seeds, with one stochastic action sample per decision: 50
episodes per checkpoint on the velocity tasks and 200 on CarGoal.  RESPO uses
this panel on Humanoid and CarGoal and its own evaluator, with 50 stochastic
episodes per checkpoint, on the other tasks.  Tables report each
method's checkpoint at its step budget (\cref{tab:final-configs}
for the off-policy methods, 9M--10M for RESPO); no checkpoint is selected by
performance.  All seeds are reported.  In the learning-curve figures, each
seed's evaluations are smoothed by a centered moving average over 10\% of the
task's step budget (0.2 decades on the logarithmic axis of \cref{fig:curves-log}),
with the window truncated symmetrically at both ends; all numbers in the
tables use the unsmoothed evaluations.  RAFALE uses episode horizon \(H=1000\), \(\rho=0.1\),
one explicit-midpoint step (two velocity-network evaluations) per action, and
action repeat one.  The update-to-data ratio (UTD) is the number of gradient
updates per environment transition, and ``anneal start'' is the environment
step at which cosine decay of the actor learning rate begins, while the critic,
kinetic-coefficient, and multiplier learning rates stay constant.

\FloatBarrier
\subsection{RAFALE Settings}

\Cref{tab:common-configs} lists the settings shared by all tasks,
\cref{tab:final-configs} the task-level settings, and \cref{tab:dual-configs}
the multiplier settings.

\begin{table}[!htbp]
\caption{Common training settings; LR denotes learning rate.}
\label{tab:common-configs}
\centering
\scriptsize
\setlength{\tabcolsep}{3.8pt}
\begin{tabular}{lrlr}
\toprule
Setting & Value & Setting & Value \\
\midrule
Batch size & 256 & Replay capacity & 1M \\
Hidden network & \(2\times256\) & Gradient-norm cap & 10 \\
Actor/critic/\(\alpha\) LR & \(3{\times}10^{-4}\) & Initial \(\log\alpha\) & \(-2\) \\
Environment warmup \(t_0\) & 5k & Dual warmup \(t_\lambda\) & 200k \\
Update cycle & 16 env.\ steps & Updates per cycle \(n_u\) & \(16\times\mathrm{UTD}\) \\
Policy delay & 2 updates & Dual cadence \(d_\lambda\) & 2000 env.\ steps \\
Target smoothing coefficient & 0.1 & Episode window & 10 completed \\
Kinetic target & \(1.125d_a\) & Minimum \(\alpha\) & 0.003 \\
Final actor-LR ratio & 0.05 & Base noise & \(\mathcal N(0,I)\) clipped elementwise to \([-1,1]\) \\
\bottomrule
\end{tabular}
\end{table}

\begin{table}[!htbp]
\caption{Task-level settings for the reported configurations.}
\label{tab:final-configs}
\centering
\scriptsize
\setlength{\tabcolsep}{4.4pt}
\begin{tabular}{lrrrr}
\toprule
Task & \(b\) & Steps & UTD & Anneal start \\
\midrule
Swimmer & 10 & 1M & 1 & 700k \\
HalfCheetah & 10 & 1M & 1 & 700k \\
Hopper & 10 & 2M & 2 & 1.7M \\
Walker2d & 10 & 1M & 1 & 700k \\
Ant & 10 & 3M & 1 & 2.7M \\
Humanoid & 10 & 3M & 2 & 2.4M \\
CarGoal & 25 & 1M & 1 & 400k \\
\bottomrule
\end{tabular}
\end{table}

\begin{table}[!htbp]
\caption{Task-specific multiplier settings used by the reported full method.}
\label{tab:dual-configs}
\centering
\scriptsize
\setlength{\tabcolsep}{6pt}
\begin{tabular}{lrrrr}
\toprule
Task & \(\eta_\lambda\) & \(\eta_p\) & \(\lambda_{\max}\) & Warm-start value \(z_{\mathrm w}\) \\
\midrule
Swimmer & .001 & 0 & 0.5 & 0.3999828 \\
HalfCheetah & .001 & 0 & 4.2 & 0 \\
Hopper & .001 & 0 & 6.7 & 0 \\
Walker2d & .001 & 0 & 16.5 & 1.654 \\
Ant & .001 & 0 & 15.0 & 0 \\
Humanoid & .001 & .05 & 9.8 & 0 \\
CarGoal & .004 & 0 & 0.5 & 0 \\
\bottomrule
\end{tabular}
\end{table}

\subsection{Baselines}
\label{app:baselines}

\paragraph{RESPO.}
We run the official implementation at \(b=10\) for 10M steps (9M on Humanoid
and CarGoal), five seeds, evaluating 50 episodes every 180k steps (every 250k on Humanoid).  The
Humanoid run uses tanh-squashed actions, observation normalization, and a
policy learning rate of \(10^{-4}\); the CarGoal run uses cost limit 25 and
its endpoint is rescored over 200 episodes.

\paragraph{CAL.}
We re-implement the official release in JAX and verify the port against it
numerically.  Settings follow the release defaults: minibatches of 12
transitions, UTD 10, hidden size 256, learning rate \(3\times10^{-4}\),
\(\gamma=\gamma_c=0.99\), and target smoothing 0.005; the augmented
coefficient \(c_{\mathrm{CAL}}\) and the other per-task settings are listed
in \cref{tab:cal-configs}.  Departures from the release: critic layer
normalization (all tasks but Humanoid), bootstrap masks at termination and
truncation, a per-step cost target
\(d_{\mathrm{CAL}}=b(1-\gamma_c^{H})/((1-\gamma_c)H)\) for the cost critic and
the dual, one sampled action per decision at evaluation, and in-loop
evaluation rollouts that are not counted in the step axis.

\begin{table}[!htbp]
\caption{CAL settings per task.  \(N_{Q_c}\) is the cost-critic ensemble size and \(k_{\mathrm{CAL}}\) the
weight of its standard deviation in CAL's cost estimate.  Mask ``t/t''
denotes a cost-critic bootstrap mask of zero on termination or truncation;
``1'' a mask forced to one.}
\label{tab:cal-configs}
\centering
\scriptsize
\setlength{\tabcolsep}{4pt}
\begin{tabular}{lccccccc}
\toprule
Setting & Swimmer & HalfCheetah & Hopper & Walker2d & Ant & Humanoid & CarGoal \\
\midrule
Steps & 1M & 1M & 2M & 1M & 3M & 3M & 1M \\
\(c_{\mathrm{CAL}}\) & 10 & 10 & 100 & 100 & 100 & 1000 & 10 \\
\(k_{\mathrm{CAL}}\) & 0.5 & 0.5 & 0.5 & 0.5 & 0.5 & 0 & 0.5 \\
\(N_{Q_c}\) & 8 & 8 & 8 & 8 & 8 & 4 & 4 \\
Max-over-heads target & no & no & no & no & no & yes & no \\
Critic layer norm & yes & yes & yes & yes & yes & no & yes \\
Cost-critic mask & t/t & t/t & t/t & t/t & 1 & t/t & 1 \\
\(d_{\mathrm{CAL}}\) & 1.000 & 1.000 & 1.000 & 1.000 & 1.000 & 1.000 & 2.500 \\
\bottomrule
\end{tabular}
\end{table}

\paragraph{RCRL.}
We re-implement the official RAC learner in JAX and verify the port against
the official loss definitions, with three departures: the reward critic masks
its bootstrap at true terminations, the safety target bootstraps through
time-limit truncations, and the two targets use separate next-action samples.
RCRL trains on a state-wise zero-violation constraint rather than a budget,
so \(b\) only judges its endpoints.  The reference configuration transfers
the official train-script values (\(\gamma_h=1\), linear learning-rate decay,
multiplier updated every 12 steps and unbounded, target entropy
\(-|\mathcal A|\), one update per environment step); per-task settings are
listed in \cref{tab:rcrl-configs}, and evaluation uses one tanh-Gaussian
sample per decision.

\begin{table}[!htbp]
\caption{RCRL settings per task.}
\label{tab:rcrl-configs}
\centering
\scriptsize
\setlength{\tabcolsep}{3pt}
\begin{tabular}{lccccccc}
\toprule
Setting & Swimmer & HalfCheetah & Hopper & Walker2d & Ant & Humanoid & CarGoal \\
\midrule
Steps & 1M & 1M & 2M & 1M & 3M & 3M & 1M \\
\(\gamma_h\) & 0.99 & 1.0 & 1.0 & 1.0 & 1.0 & 1.0 & 1.0 \\
Learning rates & decay (1M) & decay (3M) & constant & decay (3M) & decay (3M) & decay (3M) & decay (3M) \\
Multiplier update & every 12 & every 12 & every step & every 12 & every 12 & every 12 & every 12 \\
Multiplier cap & none & none & 100 & none & none & none & none \\
Safety-critic layer norm & yes & no & no & no & no & no & no \\
Checkpoint & end of run & 1M of 3M & 2M of 3M & 1M of 3M & end of run & end of run & 1M of 3M \\
\bottomrule
\end{tabular}
\end{table}

\paragraph{ALGD.}
We re-implement the released code in JAX and verify the port against it
numerically, with four proposal candidates, \(\beta=1\), five diffusion
steps, UTD 2, and \(\rho=1\) on the velocity tasks and \(\rho=0.1\) on
CarGoal (\(b=25\), 200 evaluation episodes per checkpoint).

\subsection{Ablation Protocols}
\label{app:ablation-protocol}

\paragraph{Lagrangian ablation.}
The three variants of \cref{sec:ablations} differ only in the Lagrangian
inside the actor loss: Standard \(U\) uses \(L\) of \cref{eq:linear-step},
Augmented \(A\) (RAFALE) uses \(L_A\) of \cref{eq:aug}, and Hinge \(C\) uses
\begin{equation}
L_C(s,a):=-Q_r(s,a)+\lambda\,[Q_c(s,a)-\tau]_+ ,
\label{eq:penalty-variants}
\end{equation}
which shares the threshold \(\tau=h-\lambda/\rho\) with \(L_A\).  The three
constraint terms differ only in their derivatives with respect to \(Q_c\),
\begin{equation}
m_U=\lambda,
\qquad
m_C=\lambda\,\mathbf 1\{Q_c>\tau\},
\qquad
m_A=\rho\,[Q_c-\tau]_+ ,
\label{eq:three-coefficients}
\end{equation}
which multiply \(\nabla_aQ_c\) in the action gradient, as in
\cref{eq:weight-A}; all three equal \(\lambda\) at \(Q_c=h\).  All
variants use the same learner settings and the same completed-episode
multiplier update; each run learns its own critics, replay distribution, and
multiplier trajectory.

\paragraph{Evaluation metrics.}
Let \(C_i(n)\) denote the evaluated mean episodic cost of training seed \(i\)
at environment step \(n\).  Peak is the largest across-seed mean checkpoint
cost, \(\max_n\frac{1}{5}\sum_{i=1}^5 C_i(n)\).
For each seed, Excess is \(10^{-6}\int[C_i(n)-b]_+\,dn\), computed by the
trapezoidal rule over all checkpoints of that run; \cref{tab:gate-ablation-full}
reports its mean and standard deviation across seeds.  Both metrics use
the costs of evaluated policies, not those incurred while collecting training
transitions.  A seed is feasible when its final mean evaluation cost is at
most \(b\).

\paragraph{Policy class.}
The flow, tanh-Gaussian, and eight-component tanh-GMM actors share the critic
architecture, augmented Lagrangian, multiplier update, number of environment steps, and
evaluation protocol.  The Gaussian and GMM use log-density regularization,
while the flow uses kinetic-energy regularization.  This comparison evaluates
the complete actor configurations; it does not isolate the policy family
from its regularizer.

\FloatBarrier
\subsection{Additional Results}
\label{app:more-results}

\paragraph{Seed-level feasibility.}
\Cref{tab:feasible-seeds} gives, for the endpoints of \cref{tab:main-results}, the
number of seeds whose final mean cost is within budget.

\begin{table}[!htbp]
\caption{Seeds, out of five, whose final mean episodic cost is at or below the
budget, for the endpoints of \cref{tab:main-results}.  RAFALE's HalfCheetah
seed 0 ends at 10.9 against a budget of 10.}
\label{tab:feasible-seeds}
\centering
\scriptsize
\setlength{\tabcolsep}{5pt}
\begin{tabular}{lccccc}
\toprule
Task & RESPO & CAL & RCRL & ALGD & RAFALE \\
\midrule
Swimmer     & 5 & 1 & 4 & 5 & 5 \\
HalfCheetah & 2 & 2 & 4 & 4 & 4 \\
Hopper      & 4 & 1 & 5 & 1 & 5 \\
Walker2d    & 2 & 4 & 4 & 2 & 5 \\
Ant         & 5 & 4 & 4 & 1 & 5 \\
Humanoid    & 5 & 5 & 4 & 5 & 5 \\
CarGoal     & 4 & 5 & 3 & 4 & 5 \\
\bottomrule
\end{tabular}
\end{table}

\paragraph{Learning curves against baselines.}
\Cref{fig:curves-offpolicy} in \cref{sec:main-results} compares the four
off-policy methods, which share RAFALE's step budgets; the upper block holds the
1M-step tasks and the lower block the longer ones.  RESPO trains for up to 10M steps; \cref{fig:curves-log} adds
it on a logarithmic step axis, with the dotted vertical line marking the step budget of the other methods, and \cref{tab:respo-matched} tabulates its
values at RAFALE's step budget.

\begin{figure}[!htbp]
\centering
\includegraphics[width=\textwidth]{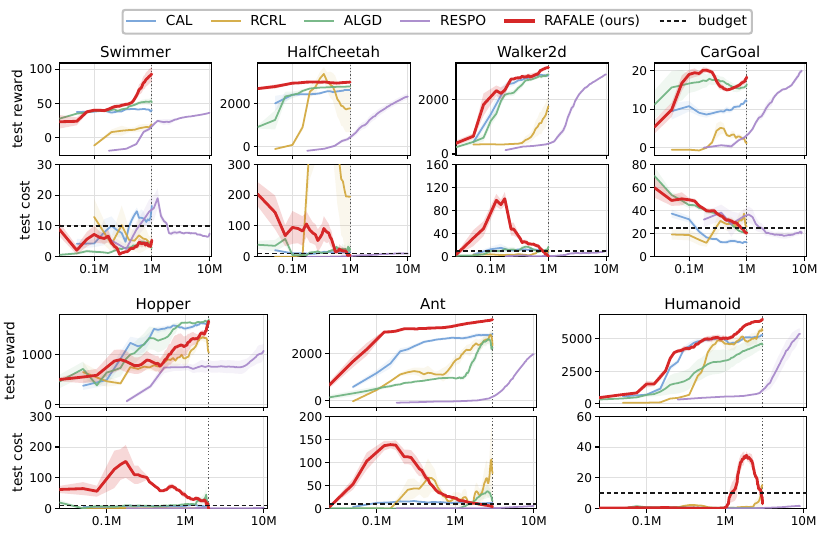}
\caption{All five methods under the convention of \cref{fig:curves-offpolicy},
with environment steps on a logarithmic axis so that RESPO's 10M-step
trajectories fit alongside the others; the dotted line marks the step budget of
the off-policy methods.}
\label{fig:curves-log}
\end{figure}

\begin{table}[!htbp]
\caption{RESPO at RAFALE's step budget (mean \(\pm\) SD over five seeds), read
at that step, interpolating linearly between the two bracketing checkpoints
where it was not evaluated.}
\label{tab:respo-matched}
\centering
\scriptsize
\renewcommand{\arraystretch}{0.95}
\setlength{\tabcolsep}{4pt}
\begin{tabular}{lrrrr}
\toprule
Task & Step & RESPO reward & RESPO cost & RESPO final reward \\
\midrule
Swimmer     & 1M & \(17\pm9\)   & \(16.6\pm9.8\) & 36 \\
HalfCheetah & 1M & \(434\pm231\) & \(1.9\pm2.1\) & 2344 \\
Hopper      & 2M & \(776\pm292\) & \(0.5\pm0.7\) & 1073 \\
Walker2d    & 1M & \(373\pm86\)  & \(1.2\pm0.6\) & 2925 \\
Ant         & 3M & \(117\pm41\)  & \(0.7\pm0.3\) & 1950 \\
Humanoid    & 3M & \(916\pm326\) & \(0.3\pm0.3\) & 5380 \\
CarGoal     & 1M & \(3.3\pm1.1\)     & \(38.8\pm16.8\) & 20.0 \\
\bottomrule
\end{tabular}
\end{table}

\paragraph{Lagrangian-variant curves.}
\Cref{fig:rq2-curves} shows the learning curves of the three variants of
\cref{eq:penalty-variants}.  The augmented Lagrangian lowers the peak
evaluated checkpoint cost on all three tasks; on Humanoid it attenuates the largest
excursions without uniformly lowering the trajectory and finishes with all five
seeds feasible.  

\begin{figure}[!htbp]
\centering
\includegraphics[width=\textwidth]{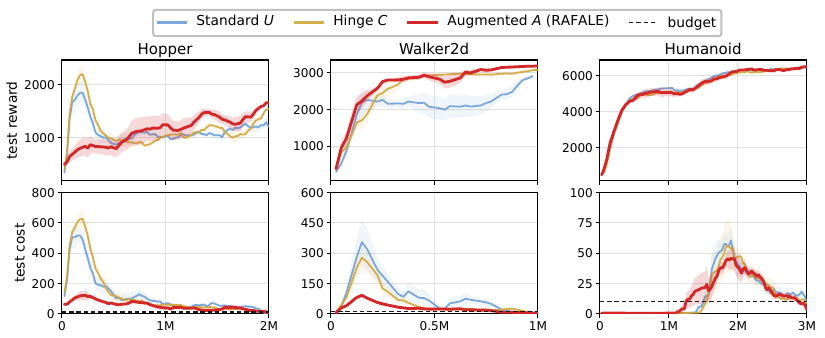}
\caption{Learning curves of the three Lagrangian variants: mean \(\pm1\)
standard error over five seeds at the evaluation steps shared by all seeds;
dashed lines mark the budget.  The augmented Lagrangian
\(L_A\) reduces transient cost excursions relative to the hinge control \(L_C\)
and the standard Lagrangian \(L\), without a systematic loss of final reward.}
\label{fig:rq2-curves}
\end{figure}

\paragraph{Actor-family curves.}
\Cref{fig:rq3-curves} shows the learning curves behind \cref{tab:actor-family}:
the flow actor against tanh-Gaussian and eight-component tanh-GMM actors under
the shared safety machinery, each with its family-native regularizer.

\begin{figure}[!htbp]
\centering
\includegraphics[width=\textwidth]{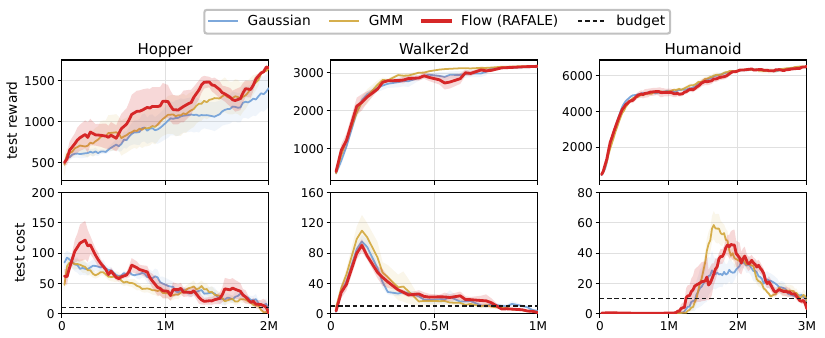}
\caption{Learning curves of the flow, tanh-Gaussian, and eight-component
tanh-GMM actors: mean \(\pm1\) standard error over five seeds at the
evaluation steps shared by all seeds, with 50 evaluation episodes per
checkpoint; dashed lines mark the budget.}
\label{fig:rq3-curves}
\end{figure}

\paragraph{Frozen-critic multiplier response.}
\Cref{fig:lambda-audit} trains the flow actor against the frozen critics of a
1M-step Hopper checkpoint for several fixed multipliers; larger multipliers shift the
generated-action cost-critic distribution left, a diagnostic of the implemented
update.

\begin{figure}[!htbp]
\centering
\includegraphics[width=\textwidth]{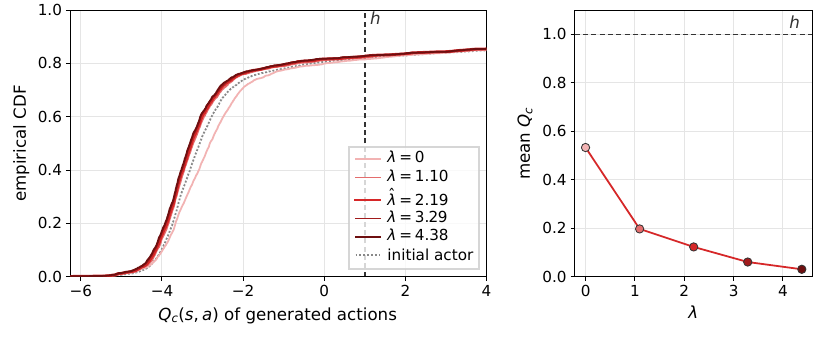}
\caption{Frozen-critic multiplier response.  For each fixed \(\lambda\), a flow
actor is trained from the same random initialization (dotted) for the same
number of updates.  Larger multipliers shift the generated-action cost-critic
CDF left.  The dashed lines mark the budget \(h\) in cost-critic units, and \(\hat\lambda\) is the multiplier stored
in the checkpoint.}
\label{fig:lambda-audit}
\end{figure}

\end{document}